\icmltitlerunning{Mutual Information Neural Estimation}
\newcommand{\beq}{\begin{equation}}
\newcommand{\eeq}{\end{equation}}
\newcommand{\be}{\begin{equation}}
\newcommand{\ee}{\end{equation}}
\newcommand{\beqa}{\begin{eqnarray}}
\newcommand{\eeqa}{\end{eqnarray}}
\newcommand{\bean}{\begin{eqnarray*}}
\newcommand{\eean}{\end{eqnarray*}}
\newcommand\EE{\mathbb{E}}
\newcommand\NN{\mathbb{N}}
\newcommand\PP{\mathbb{P}}
\newcommand\QQ{\mathbb{Q}}
\newcommand\RR{\mathbb{R}}
\newcommand{\norm}[1]{\left\lVert#1\right\rVert} 
\newcommand{\cF}{\mathcal{F}}
\newcommand\unit{\mathbbm{1}}
\newcommand\Eq{Eqn.}
\newcommand\Eqs{Eqns.}
\newcommand\Fig{Fig.}
\newcommand\Figs{Figs.}
\newcommand\Tbl{Tbl.}
\newcommand{\FGen}{T}
\newcommand{\SN}{\FGen_{\theta}}
\newcommand{\FDG}{f}
\newcommand{\KL}[2]{D_{KL}(#1\mid\mid#2)}
\newcommand{\FD}[2]{D_{\FDG}(#1\mid\mid#2)}
\newcommand{\PJ}[2]{\mathbb{P}_{#1#2}}  
\newcommand{\PI}[2]{\mathbb{P}_{#1}\otimes\mathbb{P}_{#2}}  
\newcommand\indep{\protect\mathpalette{\protect\indep}{\perp}}
\def\indep#1#2{\mathrel{\rlap{$#1#2$}\mkern2mu{#1#2}}}
\newtheorem{theo}{Theorem}
\newtheorem{lemma}{Lemma}
\theoremstyle{definition}
\theoremstyle{definition}
\newtheorem{definition}{Definition}[section]
\newcommand\R{\mathbb{R}}
\newcommand\PX{\mathbb{P}_X}
\newcommand\PZ{\mathbb{P}_Z}
\newcommand{\ab}[2]{\textcolor{green}{#1 {\em (Aristide: #2)}}}
\DeclareMathOperator*{\argmax}{arg\,max}
\begin{document}

\twocolumn[
\icmltitle{Mutual Information Neural Estimation}



\icmlsetsymbol{equal}{*}

\begin{icmlauthorlist}
\icmlauthor{Mohamed Ishmael Belghazi}{mila}
\icmlauthor{Aristide Baratin}{mila,mcgill}
\icmlauthor{Sai Rajeswar}{mila}
\icmlauthor{Sherjil Ozair}{mila}
\icmlauthor{Yoshua Bengio}{mila,cifar,ivado}
\icmlauthor{Aaron Courville}{mila,cifar}
\icmlauthor{R Devon Hjelm}{mila,ivado}
\end{icmlauthorlist}

\icmlaffiliation{mila}{Montr\'eal Institute for Learning Algorithms (MILA), University of Montr\'eal}
\icmlaffiliation{ivado}{The Institute for Data Valorization (IVADO)}
\icmlaffiliation{cifar}{Canadian Institute for Advanced Research (CIFAR)}
\icmlaffiliation{mcgill}{Department of Mathematics and Statistics, McGill University}

\icmlcorrespondingauthor{Mohamed Ishmael Belghazi}{ishmael.belghazi@gmail.com}


\icmlkeywords{Machine Learning, ICML, Deep Learning, 
			  Neural Networks, GAN, Generative Adversarial Networks, Information Theory}

\vskip 0.3in
]



\printAffiliationsAndNotice{}  

\begin{abstract}
 We argue that the estimation of mutual information between high dimensional continuous random variables can be achieved by gradient descent over neural networks.
  We present a Mutual Information Neural Estimator (MINE) that is linearly scalable in dimensionality as well as in sample size, trainable through back-prop, and strongly consistent.
We present a handful of applications on which MINE can be used to minimize or maximize mutual information. We apply MINE to improve adversarially trained generative models. We also use MINE to implement the Information Bottleneck, applying it to supervised classification; our results demonstrate substantial improvement in flexibility and performance in these settings.
\end{abstract}

\section{Introduction}
Mutual information is a fundamental quantity for measuring the relationship between random variables. In data science it
has found applications in a wide range of domains and tasks, including biomedical sciences \citep{maes1997multimodality}, blind source separation~\citep[BSS, e.g., independent component analysis,][]{hyvarinen2004independent}, information bottleneck~\citep[IB,][]{tishby2000information}, feature selection~\citep{kwak2002input, peng2005feature}, and causality~\citep{butte2000mutual}.

Put simply, mutual information quantifies the dependence of two random variables $X$ and $Z$. It has the form, 
\begin{align}
I(X; Z) = \int_{\mathcal{X} \times \mathcal{Z}} \log{\frac{d\PJ{X}{Z}}{d\PI{X}{Z}}} d\PJ{X}{Z},
\label{eq:mutual_information}
\end{align}
where $\PJ{X}{Z}$ is the joint probability distribution, and $\PX =
\int_{\mathcal{Z}} d\PJ{X}{Z}$ and $\PZ = \int_{\mathcal{X}} d\PJ{X}{Z}$ are
the marginals. In contrast to correlation, mutual information captures non-linear statistical dependencies between variables, and thus can act as a measure of true dependence~\citep{kinney2014equitability}.

  Despite being a pivotal quantity across data science, mutual information has historically been difficult to compute~\citep{paninski2003estimation}.
  Exact computation is only tractable for discrete variables (as the sum can be computed exactly), or for a limited family of problems where the probability distributions are known.
  For more general problems, this is not possible.
  Common approaches are non-parametric~\citep[e.g., binning, likelihood-ratio estimators based on support
  vector machines, non-parametric kernel-density estimators; see,][]{fraser1986independent, darbellay1999estimation, suzuki2008approximating, kwak2002input, moon1995estimation, kraskov2004estimating}, or rely on approximate gaussianity of data distribution~\citep[e.g., Edgeworth expansion,][]{van2005edgeworth}.
  Unfortunately, these estimators typically do not scale well with sample size or dimension~\citep{Gao2014}, and thus cannot be said to be general-purpose.
Other recent works include \citet{Kandamay2017MI, Singh2016MI, Moon2017MI}. 
  
In order to achieve a general-purpose estimator, we rely on the well-known characterization of the mutual information as the Kullback-Leibler (KL-) divergence~\citep{kullback1997information} between the joint distribution and the product of the marginals (i.e., $I(X; Z) = \KL{\PJ{X}{Z}}{\PI{X}{Z}}$).
Recent work uses a dual formulation to cast the estimation of $\FDG$-divergences~\citep[including the KL-divergence, see][]{nguyen2010estimating} as part of an adversarial game between competing deep neural networks~\citep{nowozin2016f}. This approach is at the cornerstone of generative adversarial networks~\citep[GANs, ][]{goodfellow2014generative}, which train a generative model without any explicit assumptions about the underlying distribution of the data.
  
In this paper we demonstrate that exploiting dual optimization to estimate divergences goes beyond the minimax objective as formalized in GANs.
We leverage this strategy to offer a general-purpose parametric neural estimator of mutual information based on dual representations of the KL-divergence~\citep{ruderman2012tighter}, which we show is valuable in settings that do not necessarily involve an adversarial game.
Our estimator is scalable, flexible, and completely trainable via back-propagation.
The contributions of this paper are as follows:
  \begin{itemize}
  \item We introduce the Mutual Information Neural Estimator (MINE), which is scalable, flexible, and completely trainable via back-prop, as well as provide a thorough theoretical analysis.
  \item We show that the utility of this estimator transcends the minimax objective as formalized in GANs, such that it can be used in mutual information estimation, maximization, and minimization.
  \item We apply MINE to palliate mode-dropping in GANs and to improve reconstructions and inference in Adversarially Learned Inference~\citep[ALI,][]{dumoulin2016adversarially} on large scale datasets. 
  \item We use MINE to apply the Information
    Bottleneck method~\cite{tishby2000information} in a continuous setting, and show that this approach outperforms variational bottleneck methods~\citep{Alemi2016deep}.
\end{itemize}

\section{Background}
\subsection{Mutual Information}
  Mutual information is a Shannon entropy-based measure of dependence between random variables. 
  The mutual information between $X$ and $Z$ can be understood as the decrease of the uncertainty in $X$ given $Z$:
 \beq
    I(X;Z) := H(X) - H(X \mid Z),
\eeq
  where $H$ is the Shannon entropy, and $H(X\, |\, Z)$ is the conditional entropy of $Z$ given $X$.
As stated in \Eq~\ref{eq:mutual_information} and the discussion above, the mutual information is equivalent to the Kullback-Leibler (KL-) divergence between the joint, $\PJ{X}{Z}$, and the product of the marginals $\PI{X}{Z}$:
\beq \label{MIdiv}
I(X,Z) = \KL{\PJ{X}{Z}}{\PI{X}{Z}},
\eeq
where $D_{KL}$ is defined as\footnote{Although the discussion is more general, we can think of $\PP$ and $\QQ$ as being distributions on some compact domain $\Omega \subset \RR^d$, with density $p$ and $q$ respect the Lebesgue measure $\lambda$, so that $D_{KL} = \int p \log \frac{p}{q} d\lambda$.},
\begin{align} \label{KL_div}
      \KL{\PP}{\QQ} := 
      \EE_{\PP}\left[ \log\frac{d\PP}{d\QQ}\right].
 \end{align}
whenever $\PP$ is absolutely continuous with respect to $\QQ$\footnote{and infinity otherwise. 
}. 
 
The intuitive meaning of \Eq~\ref{MIdiv} is clear: the larger the divergence between the joint and the product of the marginals, the stronger the dependence between $X$ and $Z$.  This divergence, hence the mutual information, vanishes for fully independent variables.

\subsection{Dual representations of the KL-divergence.}
A key technical ingredient of MINE are {\it dual representations} of the KL-divergence. 
We will primarily work with the Donsker-Varadhan representation~\citep{DonskerVaradhan}, which results in a tighter estimator; but will also consider the dual $f$-divergence representation~\citep{keziou2003fdivergence,nguyen2010estimating, nowozin2016f}.

  \paragraph{The Donsker-Varadhan representation.}
  
The following theorem gives a representation of the KL-divergence ~\citep{DonskerVaradhan}:
  \begin{theo}[Donsker-Varadhan representation] \label{DVtheorem}
The KL divergence admits the following dual representation: 
\begin{align}
    \label{eq:donsker}
    \KL{\PP}{\QQ} = \sup_{\FGen : \Omega \to \RR} \EE_{\PP}[\FGen] - \log(\EE_{\QQ}[e^{\FGen}]),
  \end{align}
where the supremum is taken over all functions $T$  such that the two expectations are finite.  \end{theo}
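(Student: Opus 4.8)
The plan is to prove the two inequalities that pin down the supremum, using the \emph{Gibbs variational principle} (equivalently, Legendre duality for the log-partition functional). For any admissible $T$ I would introduce the tilted (Gibbs) probability measure $\mathbb{G}$ on $\Omega$ defined by $d\mathbb{G} = e^{T}/\EE_{\QQ}[e^{T}]\, d\QQ$; this is a bona fide probability measure because its density against $\QQ$ is nonnegative and integrates to one by construction, and it is absolutely continuous with respect to $\QQ$.

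First, for the upper bound, I would factor the Radon--Nikodym derivative as $d\PP/d\mathbb{G} = (d\PP/d\QQ)(d\QQ/d\mathbb{G})$ and take logarithms, obtaining $\log(d\PP/d\mathbb{G}) = \log(d\PP/d\QQ) - T + \log\EE_{\QQ}[e^{T}]$. Taking the expectation under $\PP$ and rearranging yields the exact gap identity
\[
\KL{\PP}{\QQ} - \bigl(\EE_{\PP}[T] - \log\EE_{\QQ}[e^{T}]\bigr) = \KL{\PP}{\mathbb{G}} \geq 0,
\]
where nonnegativity on the right is just the Gibbs inequality (positivity of the KL divergence). This shows $\EE_{\PP}[T] - \log\EE_{\QQ}[e^{T}] \le \KL{\PP}{\QQ}$ for \emph{every} admissible $T$, so the supremum is at most the KL divergence.

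Second, for attainment, I would exhibit the optimizer $T^{\star} = \log(d\PP/d\QQ)$. A direct computation gives $\EE_{\PP}[T^{\star}] = \KL{\PP}{\QQ}$ and $\EE_{\QQ}[e^{T^{\star}}] = \int (d\PP/d\QQ)\, d\QQ = 1$, hence $\log\EE_{\QQ}[e^{T^{\star}}] = 0$ and the objective equals $\KL{\PP}{\QQ}$ exactly. Combined with the upper bound this forces the supremum to equal $\KL{\PP}{\QQ}$; equivalently, the gap identity shows equality holds precisely when $\mathbb{G} = \PP$, i.e.\ when $T = T^{\star}$ up to an additive constant, which is why the representation is an equality rather than a mere bound.

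The main obstacle is measure-theoretic bookkeeping rather than any single hard inequality. I would need to confirm that $T^{\star}$ is itself admissible, namely that $\EE_{\PP}[T^{\star}]$ and $\EE_{\QQ}[e^{T^{\star}}]$ are both finite, which is exactly the standing hypothesis $\PP \ll \QQ$ together with finiteness of the KL divergence; the definition of $\mathbb{G}$ and the factoring of the densities also require $\EE_{\QQ}[e^{T}] \in (0,\infty)$ wherever the objective is evaluated. Finally, the degenerate case $\PP \not\ll \QQ$ (where $\KL{\PP}{\QQ} = +\infty$) must be handled separately, by producing a sequence of bounded functions $T$ concentrating mass on the singular part and driving $\EE_{\PP}[T] - \log\EE_{\QQ}[e^{T}]$ to $+\infty$, so that the supremum is $+\infty$ as well.
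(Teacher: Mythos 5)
Your proposal is correct and follows essentially the same route as the paper's own proof: both introduce the Gibbs measure $d\mathbb{G} = e^{T}/\EE_{\QQ}[e^{T}]\, d\QQ$, express the gap $\KL{\PP}{\QQ} - \bigl(\EE_{\PP}[T] - \log\EE_{\QQ}[e^{T}]\bigr)$ as the KL divergence $\KL{\PP}{\mathbb{G}} \geq 0$, and obtain tightness at $T^{\ast} = \log(d\PP/d\QQ)$ up to an additive constant. Your additional attention to admissibility of $T^{\ast}$ and the degenerate case $\PP \not\ll \QQ$ goes slightly beyond the paper's argument but does not change the approach.
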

\proof{See the Supplementary Material.}

A straightforward consequence of Theorem \ref{DVtheorem} is as follows. Let $\cF$ be {\it any} class of functions $\FGen : \Omega \to \RR$ satisfying the integrability constraints of the theorem.
We then have the lower-bound\footnote{The bound in \Eq~\ref{eq:donskerbound} is known as the {\it compression lemma}  in the PAC-Bayes literature ~\citep{Banerjee}.}: 
\beq
\label{eq:donskerbound}
\KL{\PP}{\QQ}  \geq \sup_{\FGen \in \mathcal{F}} \EE_{\PP}[\FGen] - \log(\EE_{\QQ}[e^{\FGen}]).
\eeq
Note also that the bound is {\it tight} for optimal functions $T^\ast$ that relate the distributions to the \emph{Gibbs density} as,
\beq d\PP = \frac{1}{Z} e^{\FGen^\ast}  d\QQ, \,\, \mbox{where} \,\,  Z = \EE_{\QQ}[e^{\FGen^\ast}].
\eeq 

\paragraph{The $f$-divergence representation.}

It is worthwhile to compare the Donsker-Varadhan representation to the $f$-divergence representation proposed in \citet{nguyen2010estimating, nowozin2016f}, which leads 
to the following bound:
 \beq \label{eq:nguyenKLbound}
\KL{\PP}{\QQ}  \geq \sup_{\FGen \in \mathcal{F}} \EE_{\PP}[\FGen] - \EE_{\QQ}[e^{\FGen-1}].
\eeq
Although the bounds in \Eqs~\ref{eq:donskerbound} and~\ref{eq:nguyenKLbound} are tight for sufficiently large families $\mathcal{F}$, the Donsker-Varadhan bound is {\it stronger} in the sense that, for any fixed $T$,  the right hand side of \Eq~\ref{eq:donskerbound} is larger\footnote{To see this, just apply the identity  $x \geq e \log x$ with $x = \EE_\QQ[e^T]$.} 
than the right hand side of \Eq~\ref{eq:nguyenKLbound}.  
We refer to the work by ~\citet{ruderman2012tighter} for a derivation of both representations in \Eqs~\ref{eq:donskerbound} and~\ref{eq:nguyenKLbound} from the unifying perspective of Fenchel duality.
In Section \ref{MItheo} we discuss versions of MINE based on these two representations, and numerical comparisons are performed in Section~\ref{Sec:MIestimate}. 

\section{The Mutual Information Neural Estimator}
\label{MItheo} 

In this section we formulate the framework of the Mutual Information Neural Estimator (MINE). 
We define MINE and present a theoretical analysis of its consistency and convergence properties. 

\subsection{Method} 

Using both \Eq~\ref{MIdiv} for the mutual information and the dual representation of the KL-divergence, the idea is to choose $\cF$ to be the family of functions $\FGen_\theta :\mathcal{X} \times \mathcal{Z} \to \R$  parametrized by a deep neural network with parameters $\theta \in \Theta$. We call this network the {\it statistics network}.  
We exploit the bound:
\beq \label{MIbound} 
I(X; Z) \geq I_{\Theta}(X,Z),
\eeq
where  $I_{\Theta}(X,Z)$  is the {\it neural information measure} defined as 
\beq  \label{Fdiv}
I_{\Theta}(X,Z)  =  \sup_{\theta\in\Theta} \EE_{\PJ{X}{Z}}[\FGen_\theta] - \log(\EE_{\PI{X}{Z}}[e^{\FGen_\theta}]).
\eeq
The expectations in \Eq~\ref{Fdiv} are estimated using empirical samples\footnote{Note that samples $\bar{x} \sim \mathbb{P}_X$ and  $\bar{z} \sim \mathbb{P}_Z$  from the marginals are obtained by simply dropping $x, z$ from samples $(\bar{x}, z)$ and $(x, \bar{z}) \sim \PJ{X}{Z}$.} from $\PJ{X}{Z}$ and  $\PI{X}{Z}$ or by shuffling the samples from the joint distribution along the batch axis. The objective can be maximized by gradient ascent.

It should be noted that \Eq~\ref{Fdiv} actually {\it defines} a new class information measures, 
The expressive power of neural network insures that they can approximate the mutual information with arbitrary accuracy.  


In what follows, given a distribution $\PP$, we denote by $\hat \PP^{(n)}$ as the empirical distribution associated to $n$ {\it i.i.d.} samples. 
\begin{definition}[Mutual Information Neural Estimator (MINE)]
\label{DefMINE}
Let  $\cF=\{\FGen_{\theta}\}_{\theta\in \Theta}$ be the set of functions parametrized by a neural network.  MINE is defined as, 
\beq \label{donskeremp}
\widehat{I(X;Z)}_n = 
\sup_{\theta \in \Theta} \EE_{\PP^{(n)}_{XZ}}[\FGen_\theta] - \log(\EE_{\PP^{(n)}_{X} \otimes \hat \PP^{(n)}_{Z}}[e^{\FGen_\theta}]).
\eeq
 \end{definition}

  \begin{algorithm}[ht]
    \begin{algorithmic}
      \STATE $\theta \gets \text{initialize network parameters}$
      \REPEAT
      \STATE Draw $b$ minibatch samples from the joint distribution:      
      \STATE $(\bm{x}^{(1)}, \bm{z}^{(1)}), \ldots, (\bm{x}^{(b)}, \bm{z}^{(b)}) \sim \PJ{X}{Z}$ 
      \STATE Draw $b$ samples from the $Z$ marginal distribution:
      \STATE $\bar{\bm{z}}^{(1)}, \ldots, \bar{\bm{z}}^{(b)} \sim \PP_{Z}$
      \STATE Evaluate the lower-bound:
      \STATE \hspace{-3mm} {\footnotesize $\mathcal{V}(\theta) \gets \frac{1}{b} \sum_{i=1}^{b}\SN(\bm{x}^{(i)},
      \bm{z}^{(i)}) - \log(\frac{1}{b} \sum_{i=1}^{b} e^{\SN(\bm{x}^{(i)},
      \bar{\bm{z}}^{(i)})})$}
      \STATE Evaluate bias corrected gradients (e.g., moving average):
      \STATE $\widehat{G}(\theta) \gets \widetilde{\nabla}_{\theta}\mathcal{V}(\theta)$
      \STATE Update the statistics network parameters:
      \STATE $\theta \gets \theta + \widehat{G}(\theta)$
      \UNTIL{convergence}
    \end{algorithmic} 
    \caption{MINE \label{alg:mi_donsker_estimation}}
  \end{algorithm}

Details on the implementation of MINE are provided in Algorithm~\ref{alg:mi_donsker_estimation}. 
An analogous definition and algorithm also hold for the $\FDG$-divergence formulation in \Eq~\ref{eq:nguyenKLbound}, which we refer to as MINE-$\FDG$.
Since \Eq~\ref{eq:nguyenKLbound} lower-bounds \Eq~\ref{eq:donskerbound}, it generally leads to a {\it looser} estimator of the mutual information, and numerical comparisons of MINE with MINE-$\FDG$ can be found in Section~\ref{Sec:MIestimate}.  
However, in a mini-batch setting, the SGD gradients of MINE are biased. We address this in the next section.


\subsection{Correcting the bias from the stochastic gradients}  
A naive application of stochastic gradient estimation leads to the gradient estimate:
\beq
\widehat{G}_B = \EE_{B}[\nabla_\theta T_\theta] - 
\frac{\EE_{B}[\nabla_\theta T_\theta \, e^{T_\theta}]}{\EE_B \, [e^{T_{\theta}}]}.
\eeq
where, in the second term, the expectations are over the samples of a minibatch $B$, leads to a biased estimate of the full batch gradient\footnote{From the optimization point of view, the $f$-divergence formulation has the advantage of making the use of SGD with unbiased gradients straightforward.}.    

Fortunately, the bias 
can be reduced by replacing the estimate in the denominator by an exponential moving average. For small learning rates, this improved MINE gradient estimator can be made to have arbitrarily small bias.\\ 
We found in our experiments that this improves all-around performance of MINE. 


 \subsection{Theoretical properties} 
 
 \label{theo_analysis}
 
In this section we analyze the consistency and convergence properties of MINE. All the proofs can be found in the Supplementary Material. 

\subsubsection{Consistency} 
\label{consistency}

MINE relies on a choice of $(i)$ a statistics network and $(ii)$ $n$ samples from the data distribution $\PP_{XZ}$. 

 \begin{definition}[Strong consistency] \label{def-consistency}
 The estimator $\widehat{I(X;Z)}_n$  is strongly consistent if for all $\epsilon >0$, there exists a positive integer $N$ 
 and a choice of statistics network such that:
 \[
 \vspace{-1mm}
\forall  n\geq N, \quad  | I(X, Z)-\widehat{I(X;Z)}_n | \leq \epsilon, \, a.e.
  \]
where the probability is over a set of samples. 
 \end{definition}

In a nutshell, the question of consistency is divided into two problems: an {\it approximation} problem related to the size of the family, $\mathcal{F}$, and an {\it estimation} problem related to the use of empirical measures. 
The first problem is addressed by universal approximation theorems for neural networks~\citep{Hornik1989approxtheorem}. 
For the second problem, classical consistency theorems for extremum estimators apply~\citep{deGeer2006Mestimators} under mild conditions on the parameter space. 

This leads to the two lemmas below. 
The first lemma states that the neural information measures $I_{\Theta}(X,Z)$, defined in \Eq~\ref{Fdiv}, can approximate the mutual information with arbitrary accuracy: 
\begin{lemma}[approximation] \label{lemma:approximation}
Let $\epsilon >0$. There exists a neural network parametrizing  functions $T_\theta$ with parameters $\theta$ in some compact domain $\Theta \subset \R^k$, such that 
\[ 
|I(X, Z) - I_\Theta(X,Z)| \leq  \epsilon, \, a.e.
\]
\end{lemma}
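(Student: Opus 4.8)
The plan is to prove the two-sided bound by combining the Donsker--Varadhan lower bound, which already supplies one direction, with an explicit construction of a near-optimal statistics network for the other. Writing $\PP = \PJ{X}{Z}$ and $\QQ = \PI{X}{Z}$, recall from \Eq~\ref{MIdiv} that $I(X,Z) = \KL{\PP}{\QQ}$, and from \Eq~\ref{eq:donskerbound} that for every choice of $\Theta$ we automatically have $I_\Theta(X,Z) \le I(X,Z)$. Hence it suffices to exhibit a network $\SN$ with $\theta$ in some compact $\Theta \subset \R^k$ such that $\EE_{\PP}[\SN] - \log \EE_{\QQ}[e^{\SN}] \ge I(X,Z) - \epsilon$: since $I_\Theta(X,Z)$ in \Eq~\ref{Fdiv} is a supremum over $\theta$, it dominates the value attained by this particular network, and the two estimates pinch $I_\Theta(X,Z)$ into the band $[I(X,Z)-\epsilon,\, I(X,Z)]$.

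The bridge to such a network is the exact optimizer of the Donsker--Varadhan functional. Theorem~\ref{DVtheorem} and the tightness remark show that the supremum defining $\KL{\PP}{\QQ}$ is attained by $\FGen^\ast = \log \frac{d\PP}{d\QQ}$, for which $\EE_{\QQ}[e^{\FGen^\ast}] = 1$ and $\EE_{\PP}[\FGen^\ast] = \KL{\PP}{\QQ}$. Because this log-density-ratio may be unbounded, I would first replace it by a bounded surrogate: setting $\FGen_M = \max(\min(\FGen^\ast, M), -M)$, monotone and dominated convergence give $\EE_{\PP}[\FGen_M] \to \EE_{\PP}[\FGen^\ast]$ and $\EE_{\QQ}[e^{\FGen_M}] \to \EE_{\QQ}[e^{\FGen^\ast}]$ as $M \to \infty$, so for $M$ large enough the bounded function $\tilde{\FGen} := \FGen_M$ satisfies $\EE_{\PP}[\tilde{\FGen}] - \log \EE_{\QQ}[e^{\tilde{\FGen}}] \ge I(X,Z) - \epsilon/2$. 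Under the working assumption (cf.\ the footnote to \Eq~\ref{KL_div}) that $\PP,\QQ$ have densities on a compact domain $\Omega$, one may further take $\tilde{\FGen}$ continuous via a standard mollification/Lusin argument without degrading this bound.

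Next I would invoke the universal approximation theorem~\citep{Hornik1989approxtheorem}: for any $\delta > 0$ there is a network $\SN$ with $\sup_{\Omega}|\SN - \tilde{\FGen}| \le \delta$. The key stability estimate is that the Donsker--Varadhan objective is $2$-Lipschitz in the uniform norm on bounded functions: $|\EE_{\PP}[\SN] - \EE_{\PP}[\tilde{\FGen}]| \le \delta$ directly, while $e^{-\delta}\EE_{\QQ}[e^{\tilde{\FGen}}] \le \EE_{\QQ}[e^{\SN}] \le e^{\delta}\EE_{\QQ}[e^{\tilde{\FGen}}]$ yields $|\log\EE_{\QQ}[e^{\SN}] - \log\EE_{\QQ}[e^{\tilde{\FGen}}]| \le \delta$, so the objective drops by at most $2\delta$. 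Choosing $\delta = \epsilon/4$ gives $\EE_{\PP}[\SN] - \log\EE_{\QQ}[e^{\SN}] \ge I(X,Z) - \epsilon$; taking $\Theta$ to be any compact ball containing the resulting parameter vector finishes the argument, since then $I_\Theta(X,Z) \ge \EE_{\PP}[\SN] - \log\EE_{\QQ}[e^{\SN}]$. Note the resulting bound is deterministic, which is even stronger than the ``a.e.'' phrasing of the statement.

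The main obstacle I anticipate is precisely the control of the exponential term. Because $J(\FGen) = \EE_{\PP}[\FGen] - \log\EE_{\QQ}[e^{\FGen}]$ depends on $\FGen$ through $e^{\FGen}$, mere $L^1$-approximation of the optimizer is insufficient: a poor approximation on a $\QQ$-small but high-value region could blow up $\EE_{\QQ}[e^{\FGen}]$. This is exactly why the truncation step producing a bounded $\tilde{\FGen}$ must precede the appeal to universal approximation in the uniform norm; once $\tilde{\FGen}$ is bounded, the Lipschitz estimate above renders the linear term and the log-exp term separately controllable, and the two-stage approximation closes cleanly.
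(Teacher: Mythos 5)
Your proposal is correct, but it takes a genuinely different route from the paper's proof. Both arguments start from the optimal critic $T^\ast = \log\frac{d\PP}{d\QQ}$ and truncate it to tame unboundedness, but the approximation machinery differs. The paper truncates only from \emph{above}, invokes universal approximation in the $L^1(\PP+\QQ)$ sense (corollary 2.2 of Hornik), and controls the exponential term via the Lipschitz constant $e^M$ of $\exp$ on $(-\infty,M]$, which is why it must request $L^1$ accuracy of order $\eta e^{-M}$ under $\QQ$. You truncate on \emph{both} sides, regularize to a continuous target, and then use uniform approximation on the compact domain, after which the exponential term is controlled multiplicatively, $e^{-\delta}\EE_{\QQ}[e^{\tilde \FGen}] \leq \EE_{\QQ}[e^{\SN}] \leq e^{\delta}\EE_{\QQ}[e^{\tilde \FGen}]$, making the objective $2$-Lipschitz in sup norm. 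Your route buys several things: the exponential control is immediate, with no $e^{-M}$ rescaling of the required accuracy; the network is automatically bounded above by $M+\delta$, whereas the paper must assert ``without loss of generality $T_{\hat\theta}\leq M$'' without justification; and the two-sided truncation sidesteps a subtlety the paper glosses over, namely that $T^\ast$ need not be $\QQ$-integrable (it equals $-\infty$ wherever $d\PP/d\QQ$ vanishes on a set of positive $\QQ$-measure), so the paper's appeal to $L^1(\PP+\QQ)$-density of network functions is not directly licit, while your bounded $\tilde\FGen$ never meets this issue. Your explicit sandwich, using $I_\Theta(X,Z)\leq I(X,Z)$ from \Eq~\ref{eq:donskerbound} on one side, is also the cleanest way to get the two-sided bound, which the paper obtains implicitly from positivity of the gap. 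The price you pay is the continuity step: uniform approximation requires a continuous target, and your Lusin/mollification step is asserted rather than carried out. To close it, note that a continuous $g$ with $\sup_\Omega\abs{g}\leq M$ agreeing with $\tilde\FGen$ outside a set of $(\PP+\QQ)$-measure $\delta'$ shifts $\EE_{\PP}[\cdot]$ by at most $2M\delta'$ and $\EE_{\QQ}[e^{(\cdot)}]$ by at most $e^{M}\delta'$, both negligible for small $\delta'$; the paper's $L^1$ route avoids continuity considerations altogether.
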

The second lemma states the almost sure convergence of MINE to a neural information measure as the number of samples goes to infinity: 
\begin{lemma}[estimation] \label{lemma:estimation}
Let $\epsilon >0$. Given a family of neural network functions  $T_{\theta}$  with parameters $\theta$ in some bounded domain $\Theta \subset \R^k$, there exists an $N \in \mathbb{N}$, such that 
\beq  \label{eps-estim}
 \forall  n\geq N, \quad \mid \widehat{I(X;Z)}_n - I_\Theta(X,Z) \mid \leq \epsilon, \, a.e.
 \eeq
\end{lemma}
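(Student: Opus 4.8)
The plan is to recognize this as a uniform law of large numbers statement. Write $v(\theta) = \EE_{\PJ{X}{Z}}[\SN] - \log \EE_{\PI{X}{Z}}[e^{\SN}]$ for the population objective and $\hat v_n(\theta)$ for its empirical counterpart obtained by substituting the empirical measures, so that $I_\Theta(X,Z) = \sup_\theta v(\theta)$ and $\widehat{I(X;Z)}_n = \sup_\theta \hat v_n(\theta)$. Since $|\sup_\theta \hat v_n(\theta) - \sup_\theta v(\theta)| \le \sup_\theta |\hat v_n(\theta) - v(\theta)|$, it suffices to prove that $\sup_{\theta \in \Theta}|\hat v_n(\theta) - v(\theta)| \to 0$ almost surely; the claimed $N$ is then the (random, but almost surely finite) index beyond which this supremum stays below $\epsilon$.

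First I would record the boundedness that makes everything work. Appealing to the compactness of the input domain $\Omega = \mathcal{X}\times\mathcal{Z}$ assumed earlier and the continuity of a neural network in both its input and its parameters, the map $(\theta, x, z)\mapsto \SN(x,z)$ is continuous on the compact set $\Theta \times \Omega$, hence bounded, say $|\SN| \le M$. This gives three things at once: each expectation defining $v(\theta)$ is finite; $e^{\SN}$ is sandwiched between $e^{-M}$ and $e^{M}$, so the inner expectation $\EE_{\PI{X}{Z}}[e^{\SN}]\ge e^{-M}>0$ stays bounded away from zero, making $\log$ Lipschitz there; and $\theta \mapsto \SN(x,z)$ is uniformly Lipschitz (neural nets are locally Lipschitz in $\theta$, and $\Theta$ is bounded) with a constant independent of $(x,z)$.

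Next I would split $\hat v_n - v$ into the linear term $\EE_{\hat\PP^{(n)}_{XZ}}[\SN] - \EE_{\PJ{X}{Z}}[\SN]$ and the log term, and control each uniformly in $\theta$. For the linear term this is a uniform strong law of large numbers over the class $\{\SN\}_{\theta\in\Theta}$; for the log term I first handle the inner expectation $\EE_{\hat\PP^{(n)}_{X}\otimes\hat\PP^{(n)}_{Z}}[e^{\SN}] - \EE_{\PI{X}{Z}}[e^{\SN}]$ and then compose with the Lipschitz $\log$. The uniformity comes from the standard covering argument: fix a finite $\delta$-net of $\Theta$, use the Lipschitz-in-$\theta$ bound to reduce the supremum over $\Theta$ to a maximum over the net up to an $O(\delta)$ error, then apply the ordinary strong law of large numbers at each of the finitely many net points. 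One caveat worth flagging is the inner expectation under the product of empirical marginals $\hat\PP^{(n)}_X \otimes \hat\PP^{(n)}_Z$: this is an average over all $n^2$ cross pairs (a V-statistic) rather than a plain i.i.d.\ average, but because the kernel $e^{\SN}$ is bounded, its almost-sure convergence to $\EE_{\PI{X}{Z}}[e^{\SN}]$ still follows and it inherits the same Lipschitz-in-$\theta$ control.

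The main obstacle is precisely making the uniform-in-$\theta$ control of the log term rigorous: one must combine the net/Lipschitz reduction with the almost-sure convergence at the net points while keeping the inner expectation bounded away from zero so that composing with $\log$ does not blow up. The bound $|\SN|\le M$ is what rescues this, since it simultaneously supplies the positive lower bound on the inner expectation and the Lipschitz modulus of $\log$ on $[e^{-M}, e^{M}]$; the remaining work is then the bookkeeping of assembling the two uniform laws, which is routine given the compactness of $\Theta$ and $\Omega$. This is exactly the setting in which classical consistency theorems for extremum estimators apply, as anticipated in the preceding discussion.
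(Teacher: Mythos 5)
Your proposal is correct and follows the same skeleton as the paper's proof: bound $\bigl|\sup_\theta \hat v_n - \sup_\theta v\bigr|$ by the supremum of the differences, split into the linear term $\EE_{\PP}[\SN]-\EE_{\PP_n}[\SN]$ and the log term, extract a uniform bound $|\SN|\leq M$ from continuity on the compact set $\Theta\times\Omega$, and use the resulting Lipschitz constant $e^{M}$ of $\log$ on $[e^{-M},e^{M}]$ to reduce the log term to the inner expectations of $e^{\SN}$. The one place you genuinely diverge is in how uniform convergence is justified: the paper invokes the uniform law of large numbers for continuous function families over compact parameter sets as a black box (citing van de Geer), whereas you re-derive it via a finite $\delta$-net of $\Theta$ plus the pointwise strong law at the net points --- which is, in effect, the same covering technique the paper deploys later in its sample complexity proof (Theorem~\ref{theo:rate}), so your argument is more self-contained at the cost of needing a Lipschitz-in-$\theta$ modulus rather than mere continuity. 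You also flag a subtlety the paper silently glosses over: the empirical expectation is taken under the product of empirical marginals $\hat\PP^{(n)}_X\otimes\hat\PP^{(n)}_Z$, which is a V-statistic over the $n^2$ cross pairs rather than an i.i.d.\ average, so the cited ULLN does not literally apply; your observation that boundedness of the kernel $e^{\SN}$ rescues almost-sure convergence (uniformly, via the same net) is a genuine improvement in rigor over the paper's treatment, which simply writes $\QQ_n$ as if it were an ordinary empirical measure.
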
 

Combining the two lemmas with the triangular inequality, we have,
 \begin{theo} \label{theo:consistency}
MINE is strongly consistent. 
 \end{theo}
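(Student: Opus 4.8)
The plan is to realize directly the decomposition the authors hint at: split the total error $|I(X,Z) - \widehat{I(X;Z)}_n|$ into an \emph{approximation} error, governed by the richness of the statistics network family, and an \emph{estimation} error, governed by the number of samples, and to control each by $\epsilon/2$. The two lemmas already do the analytic work, so the theorem amounts to their careful assembly.

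First I would fix $\epsilon > 0$ and invoke the approximation lemma (Lemma~\ref{lemma:approximation}) with tolerance $\epsilon/2$. This produces a specific neural network family, parametrized over a compact domain $\Theta \subset \R^k$, whose neural information measure satisfies $|I(X,Z) - I_\Theta(X,Z)| \leq \epsilon/2$ almost everywhere. This is precisely the \emph{choice of statistics network} that Definition~\ref{def-consistency} allows to depend on $\epsilon$. Next, holding that family $\Theta$ fixed, I would apply the estimation lemma (Lemma~\ref{lemma:estimation}) with the same tolerance $\epsilon/2$. Since this lemma is stated for a \emph{given} family, it supplies an integer $N \in \NN$ (depending on $\Theta$, hence on $\epsilon$) such that $|\widehat{I(X;Z)}_n - I_\Theta(X,Z)| \leq \epsilon/2$ for all $n \geq N$, almost everywhere.

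Finally I would combine the two bounds by the triangle inequality,
\beq
|I(X,Z) - \widehat{I(X;Z)}_n| \leq |I(X,Z) - I_\Theta(X,Z)| + |I_\Theta(X,Z) - \widehat{I(X;Z)}_n| \leq \epsilon,
\eeq
valid for all $n \geq N$. Because each of the two estimates holds on a set of full probability measure, their intersection also has full measure, so the combined bound holds almost everywhere, which is exactly the statement of Definition~\ref{def-consistency}.

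The one point demanding care, and the only real obstacle, is the order of the quantifiers. The approximation step must come \emph{first}, since the family $\Theta$ it produces is the input to the estimation step; consequently $N$ legitimately depends on the chosen network and on $\epsilon$, with no circularity. The argument goes through provided the estimation lemma applies to this particular fixed family — which it does, as it is quantified over an arbitrary bounded $\Theta$. All genuine content therefore resides in the two lemmas, and the consistency theorem itself is purely this triangle-inequality assembly.
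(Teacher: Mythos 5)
Your proof is correct and follows essentially the same route as the paper's: fix $\epsilon > 0$, invoke Lemma~\ref{lemma:approximation} and Lemma~\ref{lemma:estimation} each with tolerance $\epsilon/2$, and conclude by the triangle inequality. Your additional care about the quantifier order (choosing the network family first, then the sample size $N$ for that fixed family) and about intersecting the two full-measure events makes explicit what the paper's proof leaves implicit, but it is the same argument.
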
 

\subsubsection{Sample complexity}
\label{samp_complexity}

In this section we discuss the {\it sample complexity} of our estimator.
Since  the focus here is on the empirical estimation problem,  we assume that the mutual information is well enough approximated by the neural information measure $I_\Theta(X, Z)$.
The theorem below is a refinement of Lemma~\ref{lemma:estimation}: it gives how many samples we need for an empirical estimation of the neural information measure at a given accuracy and with high confidence.   

We make the following assumptions: the functions $T_{\theta}$ are $L$-Lipschitz with respect to the parameters $\theta$, and both $T_{\theta}$ and $e^{T_\theta}$ are $M$-bounded (i.e., $|T_\theta|, e^{T_\theta} \leq M$). The domain $\Theta \subset \R^d$ is bounded, so that $\|\theta\| \leq K$ for some constant $K$.   The theorem below shows a sample complexity of $\widetilde{O}\left(\frac{d \log d}{\epsilon^2}\right)$, where $d$ is the dimension of the parameter space.
\begin{theo} \label{theo:rate}
 Given any values $\epsilon,\delta$  of the desired accuracy and confidence parameters, we have,  
\beq  
\mathrm{Pr}\left(| \widehat{I(X;Z)}_n - I_\Theta(X, Z)| \leq \epsilon \right) \geq 1- \delta,
\eeq
whenever the number $n$ of samples satisfies 
\beq 
n \geq \frac{ 2M^2 (d\log (16K L \sqrt{d} /\epsilon) + 2d M + \log(2/\delta))}{\epsilon^2}.
\eeq 
\end{theo}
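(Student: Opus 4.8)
The plan is to bound the full-batch deviation by a \emph{uniform} deviation over $\Theta$ and then control the latter by a covering argument. Writing $\Phi(\theta) = \EE_{\PJ{X}{Z}}[T_\theta] - \log \EE_{\PI{X}{Z}}[e^{T_\theta}]$ for the population objective and $\hat\Phi_n(\theta)$ for its empirical counterpart (with the expectations replaced by averages over the $n$ joint samples and over the $n$ independently shuffled product samples), we have $I_\Theta(X,Z) = \sup_\theta \Phi(\theta)$ and $\widehat{I(X;Z)}_n = \sup_\theta \hat\Phi_n(\theta)$. Since $|\sup_\theta \Phi(\theta) - \sup_\theta \hat\Phi_n(\theta)| \le \sup_{\theta \in \Theta} |\Phi(\theta) - \hat\Phi_n(\theta)|$, it suffices to show that this uniform deviation is at most $\epsilon$ with probability at least $1-\delta$ once $n$ exceeds the stated threshold.

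First I would establish Lipschitz continuity of both $\Phi$ and $\hat\Phi_n$ in $\theta$. The linear term is $L$-Lipschitz directly from the $L$-Lipschitzness of $\theta \mapsto T_\theta$. For the log-partition term I would use that $|T_\theta| \le M$ forces $e^{T_\theta} \in [e^{-M}, e^{M}]$, so that $x \mapsto \log x$ is $e^{M}$-Lipschitz on the relevant range and $|e^{a} - e^{b}| \le e^{M}|a-b|$; composing these gives a Lipschitz constant of order $e^{2M}L$. This yields that both objectives are $O(e^{2M}L)$-Lipschitz in $\theta$ \emph{almost surely}, since the estimate only uses the pointwise bounds $|T_\theta - T_{\theta'}| \le L\|\theta - \theta'\|$ and the boundedness of $T$.

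Next I would cover the bounded domain $\Theta \subset \{\|\theta\|\le K\}$ by an $\eta$-net of cardinality $N \le (C K\sqrt{d}/\eta)^d$, choosing $\eta$ proportional to $\epsilon/(e^{2M}L)$ so that the Lipschitz step above transfers the uniform deviation to a maximum over the $N$ net points at the cost of an additive $\epsilon/2$. With this choice $\log N$ is of order $d\log(16KL\sqrt{d}/\epsilon) + 2dM$, which is exactly the combinatorial content of the claimed bound: the covering resolution contributes $d\log(KL\sqrt{d}/\epsilon)$, while the $e^{2M}$ factor from the log-partition Lipschitz constant contributes the additive $2dM$. At each fixed net point I would apply Hoeffding's inequality twice, once to the $[-M,M]$-valued variable $T_{\theta_i}$ and once to the $[e^{-M},e^{M}]$-valued variable $e^{T_{\theta_i}}$ (feeding the latter through the $e^{M}$-Lipschitz $\log$), then take a union bound over the $N$ points; requiring the total failure probability to be at most $\delta$ and the concentration error at most $\epsilon/2$ produces a condition of the form $n \gtrsim 2M^2(\log N + \log(2/\delta))/\epsilon^2$. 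Substituting $\log N$ gives the stated sample size, and combining the $\epsilon/2$ Lipschitz error with the $\epsilon/2$ concentration error closes the argument.

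The main obstacle is the nonlinear log-partition term $\log \EE_{\PI{X}{Z}}[e^{T_\theta}]$. Unlike the linear term, it cannot be handled by a single Hoeffding bound on bounded summands: I must first localize $\log$ using the two-sided bound $e^{-M} \le e^{T_\theta} \le e^{M}$ to obtain a finite Lipschitz constant, and only then concentrate $e^{T_\theta}$. Keeping these exponential factors from blowing up the rate — and arranging them so that they enter only additively as $2dM$ inside the logarithm of the covering number, rather than multiplicatively in the leading $1/\epsilon^2$ term — is the delicate bookkeeping; the remaining ingredients (the net, Hoeffding, the union bound, and the elementary $|\sup - \sup| \le \sup|\cdot|$ reduction) are routine. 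A secondary point worth verifying is that the shuffled product-of-marginals samples are genuinely i.i.d.\ from $\PI{X}{Z}$, so that Hoeffding applies to the second expectation without recourse to $U$-statistic machinery.
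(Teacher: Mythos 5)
Your plan reproduces the paper's own proof step for step: the $\lvert \sup_\theta - \sup_\theta\rvert \le \sup_\theta \lvert\cdot\rvert$ reduction, the split into a linear term and a log-partition term, Lipschitz continuity in $\theta$ with constant of order $e^{2M}L$ for the log-partition term, a covering of $\Theta$ of radius $\eta \propto \epsilon e^{-2M}/L$ (exactly the choice that produces $d\log(16KL\sqrt{d}/\epsilon) + 2dM$), Hoeffding at the net points, and a union bound. The problem is that the step you defer as ``delicate bookkeeping'' is not bookkeeping but a genuine gap, and along this route it cannot be closed. To push the log-partition deviation at a net point below $\epsilon/4$, the $e^{M}$-Lipschitz constant of $\log$ forces you to concentrate the empirical average of $e^{T_{\theta_j}}$ to scale $\tfrac{\epsilon}{4}e^{-M}$; since $e^{T_{\theta_j}}$ has range of order $e^{M}$, Hoeffding then gives failure probability of order $\exp\left(-c\,n\,\epsilon^2 e^{-4M}\right)$, and solving for $n$ yields $n \gtrsim e^{4M}\bigl(\log N_\eta(\Theta) + \log(2/\delta)\bigr)/\epsilon^2$. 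In other words, the exponential factors enter \emph{multiplicatively} in front of $1/\epsilon^2$, not only additively as $2dM$ inside the covering-number logarithm, and the advertised leading constant $2M^2$ does not follow from the argument you describe. Switching to Bernstein or bounded-differences inequalities does not help: the variance and sensitivity of $e^{T_\theta}$ carry the same $e^{\Theta(M)}$ factors.

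You should also know that the paper's own proof contains precisely this defect, so in that sense you have faithfully reconstructed it, flaw included. The paper states its Hoeffding inequality ``for all functions $f$'' with the $M^2$ scaling and then applies it to the family $\{e^{T_{\theta_j}}\}_j$, whose members are bounded by $e^{M}$, not $M$; and even granting that inequality, its final step multiplies a net-point deviation of $\epsilon/4$ by the $e^{M}$ Lipschitz constant of $\log$, which gives a bound of order $\epsilon\,e^{M}/4$ rather than the claimed $\epsilon/2$ whenever $M>0$. A defensible version of the theorem obtained by this covering-plus-Hoeffding strategy must either replace the factor $2M^2$ by $e^{O(M)}$, or impose bounds directly on $e^{T_\theta}$ (equivalently, absorb $e^{M}$ into the definition of the bound $M$), so your instinct that this is the delicate point is right --- but the resolution changes the statement, it does not rescue it.
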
 

\section{Empirical comparisons}
\label{Sec:MIestimate}

Before diving into applications, we perform some simple empirical evaluation and comparisons of MINE. The objective is to show that MINE is effectively able to estimate mutual information and account for non-linear dependence.

\subsection{Comparing MINE to non-parametric estimation}
We compare MINE and MINE-$\FDG$ to the $k$-NN-based non-parametric estimator found in \citet{kraskov2004estimating}. 
  In our experiment, we consider multivariate Gaussian random
  variables, $X_a$ and $X_b$, with componentwise correlation, $corr(X^i_a, X^j_b) = \delta_{ij} \, \rho$, where $\rho \in (-1,1)$ and $\delta_ij$ is Kronecker's delta.
  As the mutual information is invariant to continuous bijective transformations
  of the considered variables, it is enough to consider standardized
  Gaussians marginals. We also compare MINE (using the Donsker-Varadhan representation in \Eq~\ref{eq:donskerbound}) and MINE-$\FDG$ (based on the $\FDG$-divergence representation in \Eq~\ref{eq:nguyenKLbound}).
 
Our results are presented in \Figs~\ref{fig:omie_mutual_information_test}. We observe that both MINE and Kraskov's estimation are virtually indistinguishable from the ground truth when estimating the mutual information between bivariate Gaussians. MINE shows marked improvement over Krakov's when estimating the mutual information between twenty dimensional random variables. We also remark that MINE provides a tighter estimate of the mutual information than MINE-$\FDG$.
  
  \begin{figure}[th]
    \centering
    \begin{tabular}[c]{cc}
       \begin{subfigure}[b]{0.23\textwidth}
     	\includegraphics[width=\textwidth]{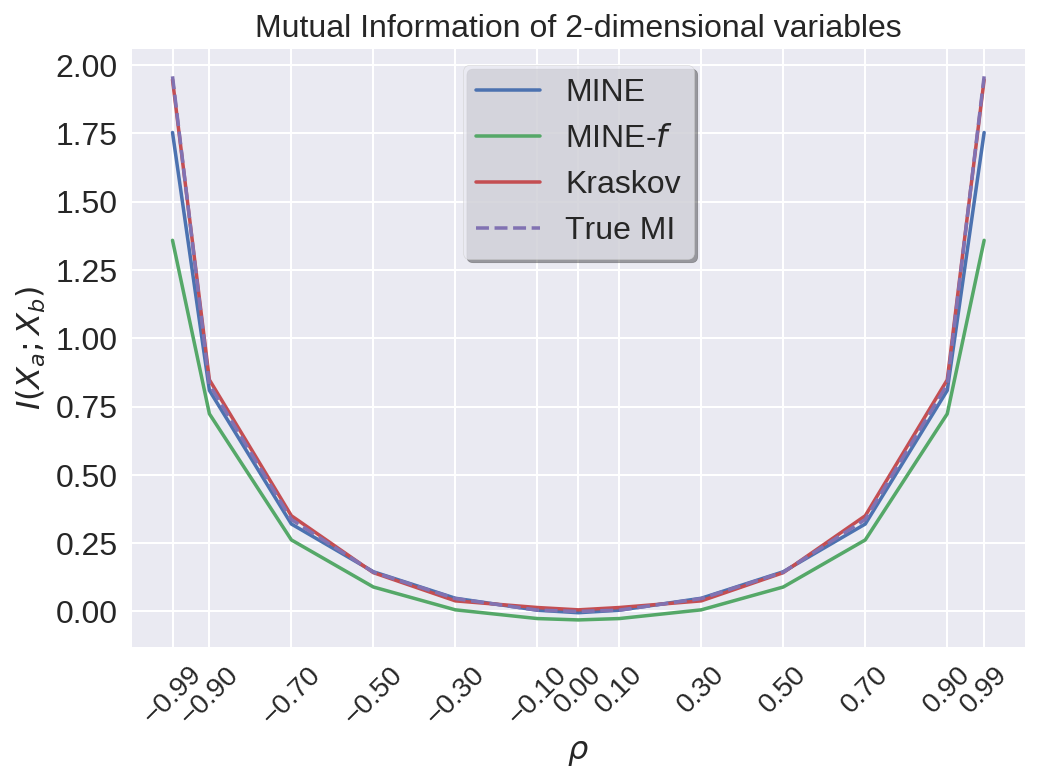}
     \end{subfigure}
     
     \begin{subfigure}[b]{0.23\textwidth}
     	\includegraphics[width=\textwidth]{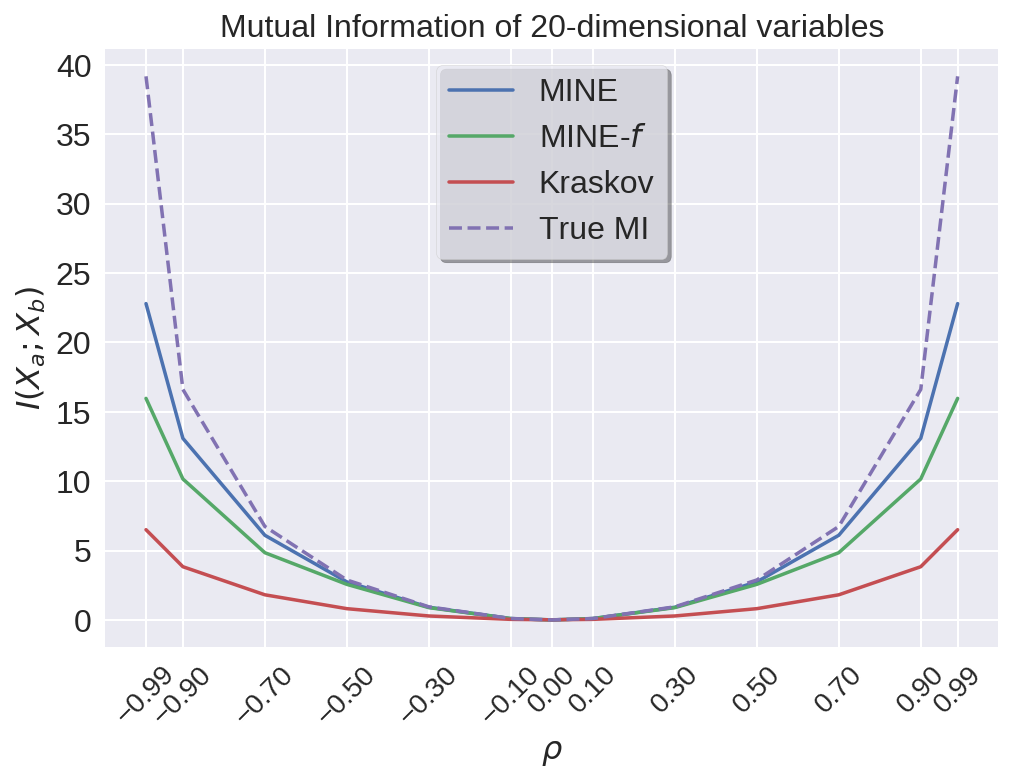}
     \end{subfigure}
     
    \end{tabular}
    \vspace{-1em}
    \caption{Mutual information between two multivariate Gaussians with component-wise correlation $\rho \in (-1,1)$. 
    }
    \label{fig:omie_mutual_information_test}
  \end{figure}

\subsection{Capturing non-linear dependencies}
An important property of mutual information between random variables with relationship $Y = f(X) + \sigma \odot \epsilon$, where $f$ is a deterministic non-linear transformation and $\epsilon$ is random noise, is that it is invariant to the deterministic nonlinear transformation, but should only depend on the amount of noise, $\sigma \odot \epsilon$. This important property, that guarantees the quantification dependence without bias for the relationship, is called \emph{equitability}~\citep{kinney2014equitability}.
Our results (\Fig~\ref{fig:nonlinear-mine}) show that MINE captures this important property. 

\begin{figure}[th]
\centering
\includegraphics[width=0.45\textwidth]{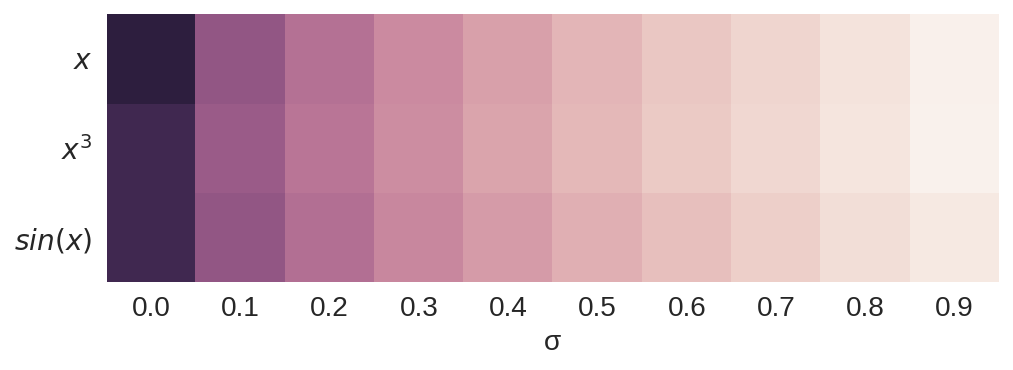}
\vspace{-1.5em}
\caption{MINE is invariant to choice of deterministic nonlinear transformation. The heatmap depicts mutual information estimated by MINE between 2-dimensional random variables $X \sim \mathcal{U}(-1, 1)$ and $Y = f(X) + \sigma \odot \epsilon$, where $f(x) \in \{x, x^3, sin(x)\}$ and $\epsilon \sim \mathcal{N}(0, I)$.}
\vspace{-1.5em}
\label{fig:nonlinear-mine}
  \end{figure}

  

\section{Applications}
\label{Sec:applications}
In this section, we use MINE to present applications of mutual information and compare to competing methods designed to achieve the same goals. 
Specifically, by using MINE to maximize the mutual information, we are able to improve mode representation and reconstruction of generative models.
Finally, by minimizing mutual information, we are able to effectively implement the information bottleneck in a continuous setting.
\subsection{Maximizing mutual information to improve GANs}
\label{sec:Gan1}
Mode collapse~\citep{che2016mode,dumoulin2016adversarially,donahue2016adversarial,Salimans2016gan,Metz2017Unrolled,saatchi2017bayesian,nguyen2017dual,lin2017pacgan,ghosh2017multi} is a common pathology of generative
adversarial networks~\citep[GANs,][]{goodfellow2014generative}, where the
generator fails to produces samples with sufficient diversity (i.e., poorly represent some modes). 

GANs as formulated in \citet{goodfellow2014generative} consist of two components: a discriminator, $D:\mathcal{X} \rightarrow [0, 1]$ and a generator, $G: \mathcal{Z} \rightarrow \mathcal{X}$, where $\mathcal{X}$ is a domain such as a compact subspace of $\RR^n$.
Given $Z \in \mathcal{Z}$ follows some simple prior distribution (e.g., a spherical Gaussian with density, $\PP_Z$), the goal of the generator is to match its output distribution to a target distribution, $\PP_X$ (specified by the data samples). The discriminator and generator are optimized through the \emph{value function},
 \begin{align}
 \min_G &\max_D V(D,G) :=
 \nonumber \\
 &\EE_{\PP_X}[D(X)] + \EE_{\PP_Z}[\log{(1 - D(G(Z))}].
 \label{eq:gan_d}
 \end{align}

A natural approach to diminish mode collapse would be regularizing the generator's loss with the neg-entropy of the samples. As the sample entropy is intractable, we propose to use the mutual information as a proxy.


Following~\citet{chen2016infogan}, we write the prior as the concatenation of noise and code variables, $Z = [\bm{\epsilon}, \bm{c}]$. We propose to palliate mode collapse by maximizing the mutual information between the samples and the code. $I(G([\bm{\epsilon}, \bm{c}]); \bm{c}) = H(G([\bm{\epsilon}, \bm{c}])) - H(G([\bm{\epsilon}, \bm{c}])\mid\bm{c})$.
The generator objective then becomes,
  \begin{align}
    \argmax_G \EE[\log(D(G([\bm{\epsilon}, \bm{c}])))] + \beta I(G([\bm{\epsilon}, \bm{c}]); \bm{c}).
    \label{eq:MINEGAN}
  \end{align}

 As the samples $G([\bm{\epsilon}, \bm{c}])$ are differentiable w.r.t. the parameters of $G$, and the statistics network being a differentiable function, we can maximize the mutual information using back-propagation and gradient ascent by only specifying this additional loss term. Since the mutual information is theoretically unbounded, we use adaptive gradient clipping (see the Supplementary Material) to ensure that the generator receives learning signals similar in magnitude from the discriminator and the statistics network.

\paragraph{Related works on mode-dropping}
Methods to address mode dropping in GANs can readily be found in the literature. 
\citet{Salimans2016gan} use mini-batch discrimination.
In the same spirit, \citet{lin2017pacgan} successfully mitigates mode dropping in GANs by modifying the discriminator to make decisions on multiple real or generated samples.
\citet{ghosh2017multi} uses multiple generators that are encouraged to generate different parts of the target distribution.
\citet{nguyen2017dual} uses two discriminators to minimize the KL and reverse KL divergences between the target and generated distributions.
\citet{che2016mode} learns a reconstruction distribution, then teach the generator to sample from it, the intuition being that the reconstruction distribution is a de-noised or \emph{smoothed} version of the data distribution, and thus easier to learn.
\citet{srivastava2017veegan} minimizes the reconstruction error in the latent space of bi-directional GANs~\citep{dumoulin2016adversarially,donahue2016adversarial}.
\citet{Metz2017Unrolled} includes many steps of the discriminator's optimization as part of the generator's objective. While \citet{chen2016infogan} maximizes the mutual information between the code and the samples, it does so by minimizing a variational upper bound on the conditional entropy~\citep{barber2003algorithm} therefore ignoring the entropy of the samples. \citet{chen2016infogan} makes no claim about mode-dropping. 

\paragraph{Experiments: Spiral, 25-Gaussians datasets}
We apply MINE to improve mode coverage when training a generative adversarial network~\citep[GAN,][]{goodfellow2014generative}.
  We demonstrate using \Eq~\ref{eq:MINEGAN} on the spiral and the 25-Gaussians datasets, comparing two models, one with $\beta = 0$ (which corresponds to the orthodox GAN as in \citet{goodfellow2014generative}) and one with $\beta = 1.0$, which corresponds to mutual information maximization.
   
  \begin{figure}[th]
    \centering
    \begin{tabular}[c]{cc}
          \begin{subfigure}[l]{0.22\textwidth}
     	\includegraphics[width=\textwidth, height=30mm, clip, trim=1cm .8cm 1cm 1cm]{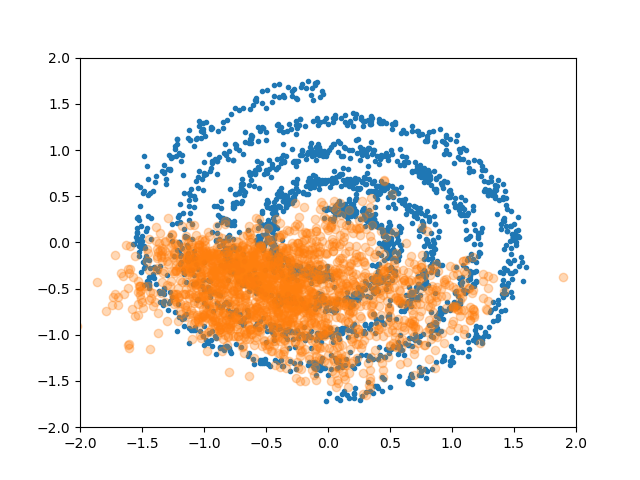}
        \caption{GAN}\label{subfig:omiegan_spiral_baseline_5000}
     \end{subfigure}
     \begin{subfigure}[l]{0.22\textwidth}
     	\includegraphics[width=\textwidth, height=30mm, clip, trim=0 .2cm 0 .2cm]{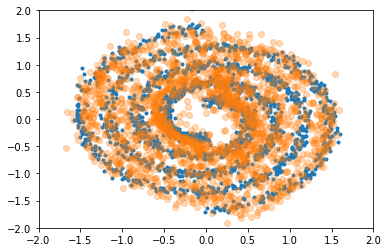}
  \caption{GAN+MINE}\label{subfig:omieomiegan_spiral_5000}
     \end{subfigure} 
    \end{tabular}
    \caption{The generator of the GAN model without mutual information maximization after $5000$ iterations suffers from mode collapse (has poor coverage of the target dataset) compared to GAN+MINE on the spiral experiment.}
    \label{fig:omie_gan_spiral}
  \end{figure}
  
  Our results on the spiral (\Fig~\ref{fig:omie_gan_spiral}) and the $25$-Gaussians (\Fig~\ref{fig:omie_gan_25_gaussians}) experiments both show improved mode coverage over the baseline with no mutual information objective. 
  This confirms our hypothesis that maximizing mutual information helps against mode-dropping in this simple setting.
  
  \begin{figure}[t!]
    \centering
    \begin{tabular}[c]{ccc}
    
     \begin{subfigure}[b]{0.15\textwidth}
     	\includegraphics[width=\textwidth, clip, trim=2.5cm 2cm 2cm 2cm]{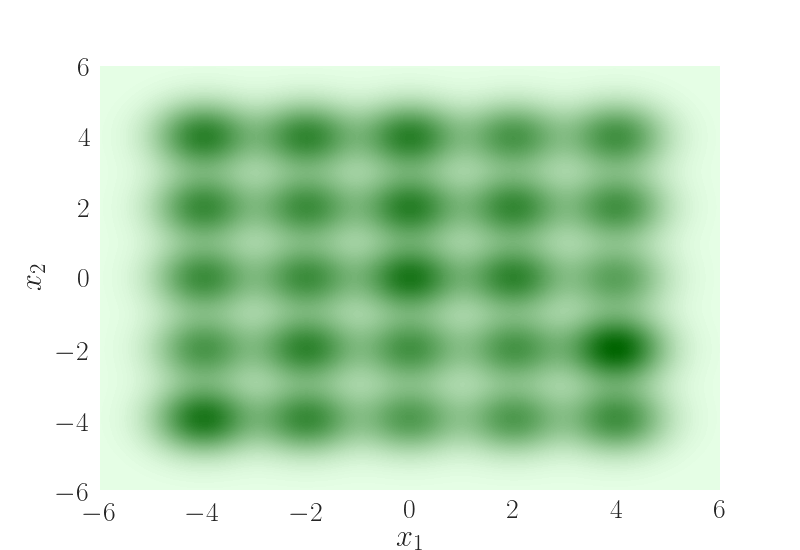}
        \caption{Original data }\label{subfig:omiegan_25gaussian_baseline_0}
     \end{subfigure}
                       
     \begin{subfigure}[b]{0.15\textwidth}
     	\includegraphics[width=\textwidth, clip, trim=2.5cm 2cm 2cm 2cm]{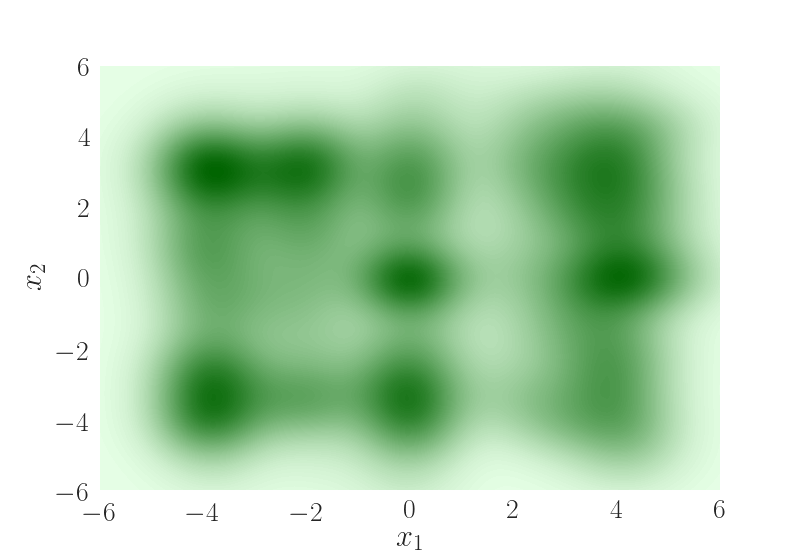}
        \caption{GAN}\label{subfig:omiegan_25gaussian_baseline_20000}
     \end{subfigure}
     
     \begin{subfigure}[b]{0.15\textwidth}
     	\includegraphics[width=\textwidth, clip, trim=2.5cm 2cm 2cm 2cm]{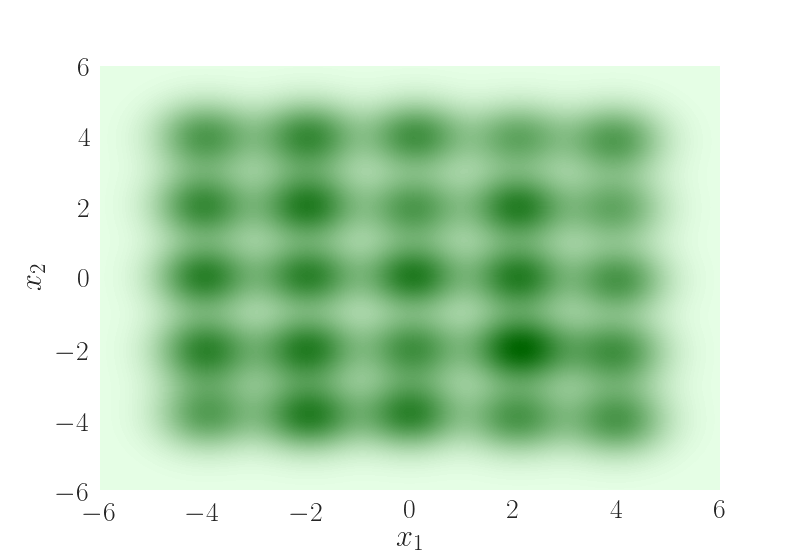}
        \caption{GAN+MINE}\label{subfig:80000_25gaussian_MINE_2}
     \end{subfigure}

    \end{tabular}
    \caption{Kernel density estimate (KDE) plots for GAN+MINE samples and GAN samples on 25 Gaussians dataset.
      }
    \label{fig:omie_gan_25_gaussians}
  \end{figure}
  
\paragraph{Experiment: Stacked MNIST}
Following \citet{che2016mode, Metz2017Unrolled, srivastava2017veegan,lin2017pacgan}, we quantitatively assess MINE's ability to diminish mode dropping on the stacked MNIST dataset which is constructed by stacking three randomly sampled MNIST digits. As a consequence, stacked MNIST offers 1000 modes. Using the same architecture and training protocol as in \citet{srivastava2017veegan,lin2017pacgan}, we train a GAN on the constructed dataset and use a pre-trained classifier on 26,000 samples to count the number of modes in the samples, as well as to compute the KL divergence between the
sample and expected data distributions. Our results in Table~\ref{table:stacked_mnist} demonstrate the effectiveness of  MINE in preventing mode collapse on Stacked MNIST.

\begin{table}[th]
\small
	\begin{center}
  	\begin{tabular}{ lcc}
		& \multicolumn{2}{c}{Stacked MNIST}  \\ \cline{2-3} 
    		 &  \makecell{Modes \\(Max 1000)}& KL \\\midrule
    		DCGAN 	& $99.0$ & $3.40$ 		\\
    		ALI   & $16.0$ &  $5.40$ 		\\
    		Unrolled GAN  & $48.7$ & $4.32$ 		\\
    		VEEGAN  		& $150.0$ &  $2.95$		\\
			PacGAN   	& $1000.0\pm0.0$& $0.06\pm1.0\mathrm{e}^{-2}$\\\midrule
			GAN+MINE (Ours)  	& $1000.0\pm0.0$& $0.05\pm6.9\mathrm{e}^{-3}$  \\\bottomrule
  	\end{tabular}
	\end{center}
	\caption{Number of captured modes and Kullblack-Leibler divergence between the training and samples distributions for DCGAN~\citep{radford2015unsupervised}, ALI~\citep{dumoulin2016adversarially}, Unrolled GAN~\citep{Metz2017Unrolled}, VeeGAN~\citep{srivastava2017veegan}, PacGAN~\citep{lin2017pacgan}.}
	\label{table:stacked_mnist}
\end{table}

  \begin{figure*}[t!]
    \centering
    \begin{tabular}[c]{ccc}
    
     \begin{subfigure}[b]{0.3\textwidth}
     	\includegraphics[width=\textwidth, clip, trim={0 5cm 0 0}]{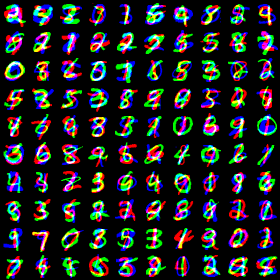}
        \caption{Training set}\label{subfig:stackedmnist_trainingset}
     \end{subfigure}&
                       
     \begin{subfigure}[b]{0.3\textwidth}
     	\includegraphics[width=\textwidth, clip, trim={0 5cm 0 0}]{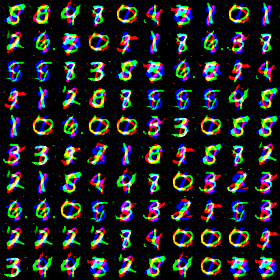}
        \caption{DCGAN}\label{subfig:stackedmnist_dcgan}
     \end{subfigure}&
     
     \begin{subfigure}[b]{0.3\textwidth}
     	\includegraphics[width=\textwidth, clip, trim={0 5cm 0 0}]{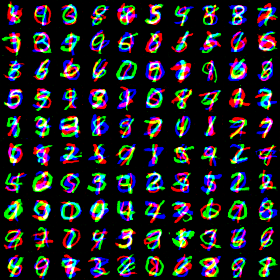}
        \caption{DCGAN+MINE}\label{subfig:stackedmnist_minegan}
     \end{subfigure}
                       
    \end{tabular}
    \caption{Samples from the Stacked MNIST dataset along with generated samples from DCGAN and DCGAN with MINE.
    While DCGAN only shows a very limited number of modes, the inclusion of MINE generates a much better representative set of samples.}
    \label{fig:stacked_mnist}
  \end{figure*}
\subsection{Maximizing mutual information to improve inference in bi-directional adversarial models}
\label{sec:Gan2}
 Adversarial bi-directional models were introduced in Adversarially Learned Inference~\citep[ALI,][]{dumoulin2016adversarially} and BiGAN~\citep{donahue2016adversarial} and are an extension of GANs which incorporate a reverse model, $F: \mathcal{X} \rightarrow \mathcal{Z}$ jointly trained with the generator.
 These models formulate the problem in terms of the value function in \Eq~\ref{eq:gan_d} between two joint distributions, $p(\bm{x}, \bm{z}) = p(\bm{z} \mid \bm{x}) p(\bm{x})$ and $q(\bm{x}, \bm{z}) = q(\bm{x} \mid \bm{z}) p(\bm{z})$ induced by the forward (encoder) and reverse (decoder) models, respectively\footnote{We switch to density notations for convenience throughout this section.}.

One goal of bi-directional models is to do inference as well as to learn a good generative model.
Reconstructions are one desirable property of a model that does both inference and generation, but in practice ALI can lack fidelity
~\citep[i.e., reconstructs less faithfully than desired, see][]{li2017towards,ulyanov2017adversarial,belghazi2018hierarchical}. 
To demonstrate the connection to mutual information, it can be shown (see the Supplementary Material for details) that the reconstruction error, $\mathcal{R}$, is bounded by,
\begin{align} \label{equ:recons_bound}
\mathcal{R} \leq \KL{q(\bm{x}, \bm{z})}{p(\bm{x}, \bm{z})} - I_q(\bm{x}, \bm{z}) + H_q(\bm{z})
\end{align}
If the joint distributions are matched, $H_q(\bm{z})$ tends to $H_p(\bm{z})$, which is fixed as long as the prior, $p(\bm{z})$, is itself fixed. Subsequently, maximizing the mutual information minimizes the expected reconstruction error.

Assuming that the generator is the same as with GANs in the previous section, the objectives for training a bi-directional adversarial model then become:
\begin{align}
\argmax_{D} &\EE_{q(\bm{x}, \bm{z})} [\log{D(\bm{x}, \bm{z})}] + \EE_{p(\bm{x}, \bm{z})}[\log{(1 - D(\bm{x}, \bm{z}))}]
\nonumber \\
\argmax_{F, G} &\EE_{q(\bm{x}, \bm{z})} [\log{(1 - D(\bm{x}, \bm{z}))}] + \EE_{p(\bm{x}, \bm{z})}[\log{D(\bm{x}, \bm{z})}] 
\nonumber \\
&+ \beta I_q(\bm{x}, \bm{z}).
\end{align}

\paragraph{Related works}
\citet{ulyanov2017adversarial} improves reconstructions quality by forgoing the discriminator and expressing the adversarial game between the encoder and decoder. 
\citet{kumar2017improved} augments the bi-directional objective by considering the reconstruction and the corresponding encodings as an additional fake pair.
\citet{belghazi2018hierarchical} shows that a Markovian hierarchical generator in a bi-directional adversarial model provide a hierarchy of reconstructions with increasing levels of fidelity (increasing reconstruction quality).
\citet{li2017towards} shows that the expected reconstruction error can be diminished by minimizing the conditional entropy of the observables given the latent representations. 
The conditional entropy being intractable for general posterior, \citet{li2017towards} proposes to augment the generator's loss with an adversarial cycle consistency loss~\citep{CycleGAN2017} between the observables and their reconstructions.
\vspace{-1cm}
\\

\paragraph{Experiment: ALI+MINE}
  In this section we compare MINE to existing bi-directional adversarial models.  
As the decoder's density is generally intractable, we use three different metrics to measure the fidelity of the reconstructions with respect to the samples; $(i)$ the euclidean reconstruction error, $(ii)$ \emph{reconstruction accuracy}, which is the proportion of labels preserved by the reconstruction as identified by a pre-trained classifier; $(iii)$ the Multi-scale structural similarity metric~\citep[MS-SSIM,][]{MS-SSIM01} between the observables and their reconstructions. 

We train MINE on datasets of increasing order of complexity: a toy dataset composed of 25-Gaussians, MNIST~\citep{lecun1998mnist}, and the CelebA dataset~\citep{liu2015deep}.
\Fig~\ref{fig:bi_directional} shows the reconstruction ability of MINE compared to ALI. Although ALICE does perfect reconstruction (which is in its explicit formulation), we observe significant mode-dropping in the sample space. MINE does a balanced job of reconstructing along with capturing all the modes of the underlying data distribution. 

  Next, we measure the fidelity of the reconstructions over ALI, ALICE, and MINE. 
   \Tbl~2 compares MINE to the existing baselines in terms of
   euclidean reconstruction errors, reconstruction accuracy, and MS-SSIM. On MNIST,
   MINE outperforms ALI in terms of reconstruction errors by a good margin
   and is competitive to ALICE with respect to reconstruction accuracy and
   MS-SSIM. Our results show that MINE's effect on reconstructions
   is even more dramatic when compared to ALI and ALICE on the CelebA dataset.

\begin{figure*}[t]
\title{Reconstructions and Samples}
\begin{minipage}{0.67\textwidth}
  \begin{minipage}{0.24\textwidth}
  \centering
  (a) ALI
  \includegraphics[width=0.99\columnwidth, clip=True, trim=40 20 40 40]{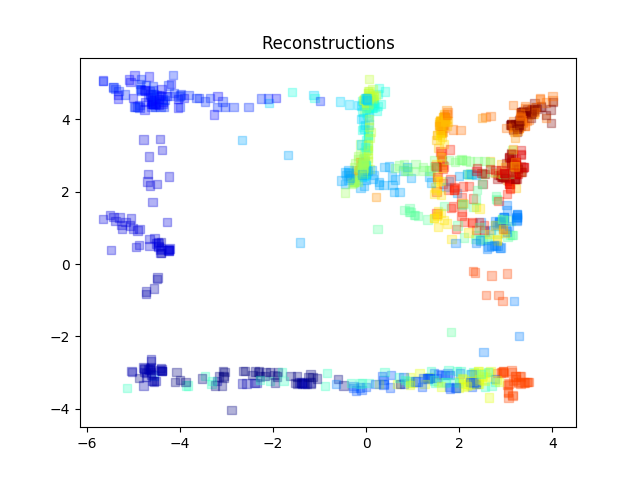}
  \end{minipage}
  \begin{minipage}{0.24\textwidth}
  \centering
  (b) ALICE ($l_2$)
  \includegraphics[width=0.99\columnwidth, clip=True, trim=40 20 40 40]{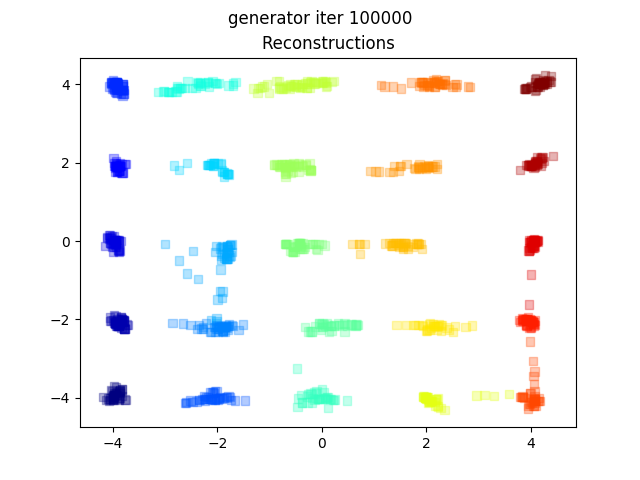}
  \end{minipage}
  \begin{minipage}{0.24\textwidth}
  \centering
  (c) ALICE (A)
  \includegraphics[width=.99\linewidth, clip=True, trim=40 20 40 40]{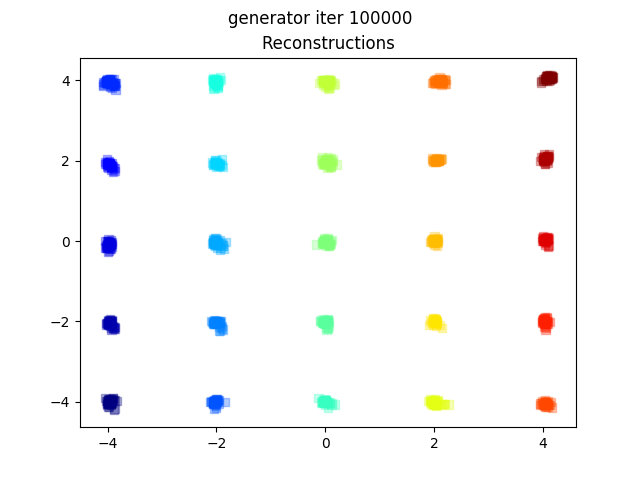}
  \end{minipage}
  \begin{minipage}{0.24\textwidth}
  \centering
  (d) ALI+MINE
  \includegraphics[width=.99\linewidth, clip=True, trim=40 20 40 40]{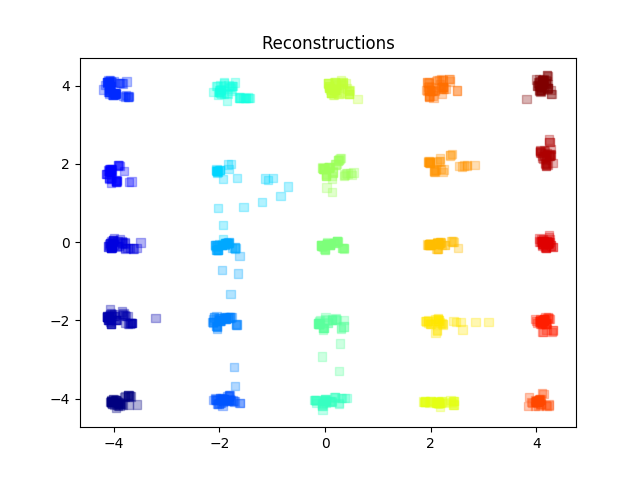}
  \end{minipage}
  \vspace{.3cm}

\begin{minipage}{0.24\textwidth}
  \centering
  \includegraphics[width=0.99\columnwidth, clip=True, trim=40 20 40 40]{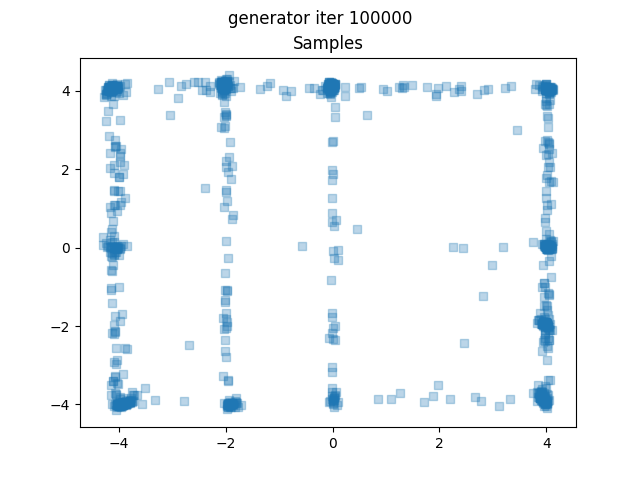}
  \end{minipage}
  \begin{minipage}{0.24\textwidth}
  \centering
  \includegraphics[width=0.99\columnwidth, clip=True, trim=40 20 40 40]{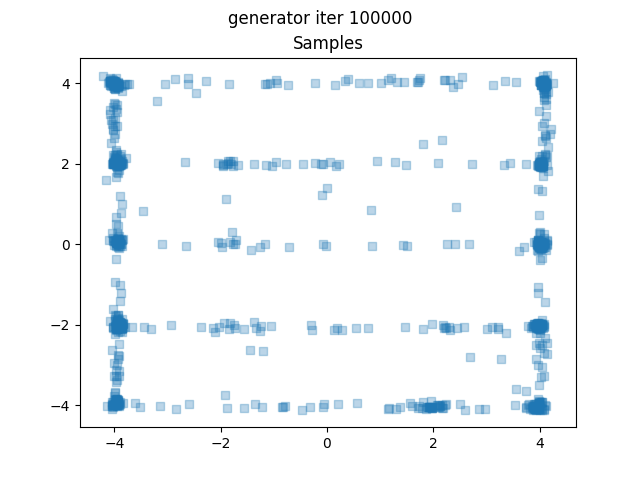}
  \end{minipage}
  \begin{minipage}{0.24\textwidth}
  \centering
  \includegraphics[width=.99\linewidth, clip=True, trim=40 20 40 40]{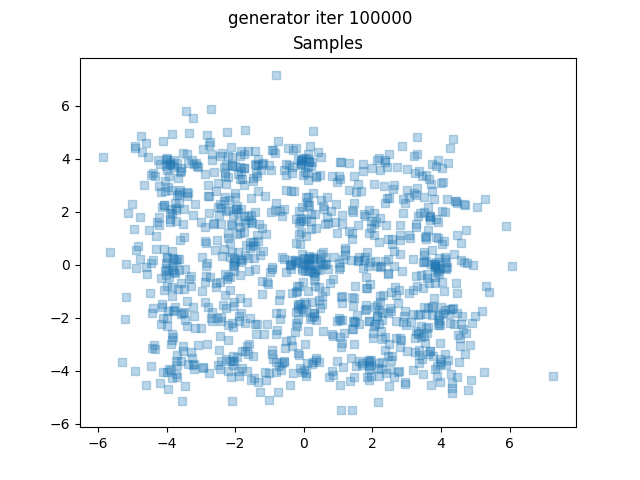}
  \end{minipage}
  \begin{minipage}{0.24\textwidth}
  \centering
  \includegraphics[width=.99\linewidth, clip=True, trim=40 20 40 40]{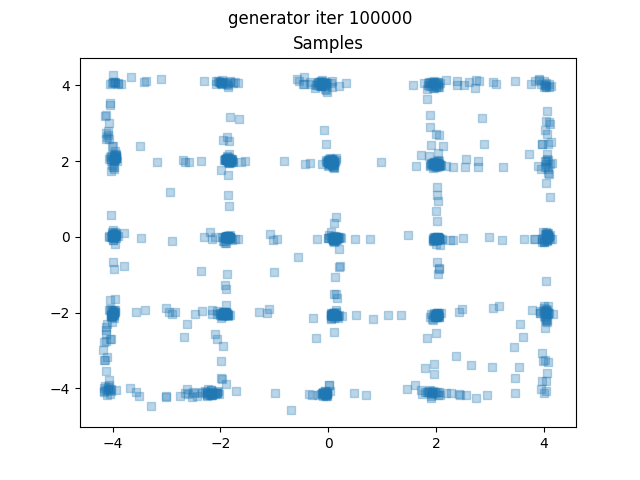}
  \end{minipage}
\end{minipage}
\begin{minipage}{0.31\textwidth}
\caption{Reconstructions and model samples from adversarially learned inference (ALI) and variations intended to increase improve reconstructions.
Shown left to right are the baseline (ALI), ALICE with the $l_2$ loss to minimize the reconstruction error, ALICE with an adversarial loss, and ALI+MINE.
Top to bottom are the reconstructions and samples from the priors.
ALICE with the adversarial loss has the best reconstruction, though at the expense of poor sample quality, where as ALI+MINE captures all the modes of the data in sample space.}
\label{fig:bi_directional}
\end{minipage}
\end{figure*}

\begin{table}[ht]
\small
\centering
\begin{tabular}{l|c|c|c}
\toprule
Model & \makecell{Recons. \\Error}   & \makecell{Recons. \\Acc.(\%)} &  MS-SSIM  \\
\midrule
\multicolumn{4}{c}{\textbf{MNIST}} \\
\midrule
ALI & 14.24 & 45.95 & 0.97\\
ALICE($l_2$) & 3.20 &99.03 &0.97 \\
ALICE(Adv.) & 5.20 & 98.17 & 0.98\\
MINE & 9.73& 96.10 & 0.99\\
\midrule
\multicolumn{4}{c}{\textbf{CelebA}} \\
\midrule
ALI   & 53.75 & 57.49  & 0.81\\
ALICE($l_2$) & 8.01 & 32.22 &0.93\\
ALICE(Adv.) & 92.56  & 48.95 & 0.51\\
MINE  & 36.11  & 76.08 & 0.99\\
\bottomrule
\end{tabular}
\label{table:mnist}
\caption{Comparison of MINE with other bi-directional adversarial models in terms of euclidean reconstruction error, reconstruction accuracy, and MS-SSIM on the MNIST and CelebA datasets. MINE does a good job compared to ALI in terms of reconstructions. 
Though the explicit reconstruction based baselines (ALICE) can sometimes do better than MINE in terms of reconstructions related tasks, they consistently lag behind in MS-SSIM scores and reconstruction accuracy on CelebA. }
\end{table}

\subsection{Information Bottleneck}
The Information Bottleneck~\citep[IB,][]{tishby2000information} is an information theoretic method for extracting relevant information, or
  yielding a representation, that
  an input $X \in \mathcal{X}$ contains about an output $Y \in \mathcal{Y}$. An optimal representation of $X$ would capture the relevant factors and
  compress $X$ by diminishing the irrelevant parts which do not contribute to
  the prediction of $Y$. IB was recently covered in the context of deep learning~\citep{tishby2015deep}, and as such can be seen as a process to construct an
  approximation of the minimally sufficient statistics of the data.  
  IB seeks an encoder, $q(Z \mid X)$, that induces the
  Markovian structure $X \rightarrow Z \rightarrow Y$. This is done by minimizing the IB Lagrangian,
  \begin{align}
    \mathcal{L}[q(Z \mid X)] = H(Y|Z) + \beta I(X,Z), 
  \end{align}
which appears as a standard cross-entropy loss augmented with a regularizer promoting minimality of the representation~\citep{AchilleSoatto2017}. Here we propose to estimate the regularizer with MINE. 

  \paragraph{Related works} In the discrete setting, \cite{tishby2000information} uses
      the Blahut-Arimoto Algorithm \cite{arimoto1972algorithm}, which can be
      understood as cyclical coordinate ascent in function spaces.
      While IB is successful and popular in a discrete
      setting, its application to the continuous setting was stifled by the
      intractability of the continuous mutual information. Nonetheless,
      IB was applied in the case of jointly Gaussian random
      variables in \cite{chechik2005information}.
      
      In order to overcome the intractability of $I(X; Z)$ in the continuous setting, \citet{Alemi2016deep, kolchinsky2017nonlinear, chalk2016relevant} exploit the variational
      bound of \citet{barber2003algorithm} to approximate the conditional entropy in $I(X; Z)$. 
      These approaches differ only on their treatment of the marginal distribution of the bottleneck variable: \citet{Alemi2016deep} assumes a
      standard multivariate normal marginal distribution,
      \citet{chalk2016relevant} uses a Student-t distribution, and \citet{kolchinsky2017nonlinear} uses non-parametric estimators. 
      Due to their reliance on a variational approximation, these methods require a tractable density for the approximate posterior, while MINE does not.
      \vspace{-1cm}
      \\
      \paragraph{Experiment: Permutation-invariant MNIST classification}
      Here, we demonstrate an implementation of the IB objective on
      permutation invariant MNIST using MINE. We compare to the Deep Variational Bottleneck~\citep[DVB,][]{Alemi2016deep} and use the same empirical setup.
      As the DVB relies on a variational bound on the conditional entropy, it therefore requires a tractable density. \citet{Alemi2016deep} opts for a conditional Gaussian encoder $\bm{z} =
      \mu(\bm{x}) + \sigma \odot \epsilon$, where $\epsilon \sim \mathcal{N}(0, I)$. 
      As MINE does not require a tractable density, we consider three type of encoders: $(i)$ a Gaussian encoder as in \citet{Alemi2016deep}; $(ii)$ an \emph{additive noise encoder}, $\bm{z} = enc(\bm{x} + \sigma \odot \epsilon)$; and $(iii)$ a \emph{propagated noise encoder}, $\bm{z} = enc([\bm{x}, \epsilon])$. 
      Our results can be seen in \Tbl~\ref{table:mnist_mi_bottleneck}, and this shows MINE as being superior in these settings.
      
  \begin{table}[h!]
  \label{table:info_bottleneck}
\centering
\begin{tabular}{c|c}

\toprule
 Model & Misclass. rate(\%)   \\
\midrule
Baseline & 1.38\% \\
Dropout & 1.34\% \\
Confidence penalty & 1.36\% \\
Label Smoothing & 1.40\% \\
DVB & 1.13\% \\
DVB + Additive noise & 1.06\% \\
\midrule
MINE(Gaussian) (ours)  & 1.11\% \\
MINE(Propagated) (ours)  & 1.10\% \\
MINE(Additive) (ours) & 1.01\% \\
\bottomrule
\end{tabular}
\caption{Permutation Invariant MNIST misclassification rate using ~\citet{Alemi2016deep} experimental setup for regularization by confidence penalty~\citep{pereyra2017regularizing}, label smoothing~\citep{pereyra2017regularizing}, Deep Variational Bottleneck(DVB)~\citep{Alemi2016deep} and MINE.
The misclassification rate is averaged over ten runs. In order to control for the regularizing impact of the additive Gaussian noise in the additive conditional, we also report the results for DVB with additional additive Gaussian noise at the input. All non-MINE results are taken from~\citet{Alemi2016deep}.}
\label{table:mnist_mi_bottleneck}
\end{table}

\section{Conclusion}
We proposed a mutual information estimator, which we called the mutual information neural estimator (MINE), that is scalable in dimension and sample-size. We demonstrated the efficiency of this estimator by applying it in a number of settings. 
First, a term of mutual information  can be introduced alleviate mode-dropping issue in generative adversarial networks~\citep[GANs,][]{goodfellow2014generative}. Mutual information can also be used to improve inference and reconstructions in adversarially-learned inference~\citep[ALI,][]{dumoulin2016adversarially}. Finally, we showed that our estimator allows for tractable application of Information bottleneck methods~\citep{tishby2000information} in a continuous setting.
\section{Acknowledgements}
We would like to thank Martin Arjovsky, Caglar Gulcehre, Marcin Moczulski, Negar Rostamzadeh, Thomas Boquet, Ioannis Mitliagkas, Pedro Oliveira Pinheiro for helpful comments, 
as well as Samsung and IVADO for their support. 

\small
\bibliography{icml2018}
\bibliographystyle{icml2018}

\newpage
\onecolumn
\section{Appendix}
  
In this Appendix, we provide additional experiment details and spell out the proofs omitted in the text.

\subsection{Experimental Details}

\subsubsection{Adaptive Clipping}
\label{ssec:adaptive_clipping}

Here we assume we are in the context of GANs described in Sections~\ref{sec:Gan1} and~\ref{sec:Gan2}, where the mutual information shows up as a regularizer in the generator objective. 

Notice that the generator is updated by two gradients. The first gradient is
that of the generator's loss, $\mathcal{L}_{g}$ with respect to the
generator's parameters $\theta$, $g_{u} := \frac{\partial
  \mathcal{L}_{g}}{\partial \theta}$. The second flows from the mutual information
  estimate to the generator, $g_{m} := -\frac{\partial \widehat{I(X;Z)}}{\partial\theta}$. If left unchecked, because mutual information is unbounded,
  the latter can overwhelm the former, leading to a failure mode of the algorithm
  where the generator puts all of its attention on maximizing the mutual
  information and ignores the adversarial game with the discriminator. We propose
  to adaptively clip the gradient from the mutual information so that its
  Frobenius norm is at most that of the gradient from the discriminator. Defining $g_a$ to be the adapted gradient following from the statistics network to the generator, we have,
  \begin{align}
    g_a = \min(\norm{g_{u}}, \norm{g_{m}}) \, \frac{g_{m}}{\norm{g_{m}}}.
  \end{align}
Note that adaptive clipping can be considered in any situation where
  MINE is to be maximized.

\subsubsection{GAN+MINE: Spiral and 25-gaussians}
In this section we state the details of experiments supporting mode dropping experiments on the spiral and 25-Gaussians dataset. For both the datasets we use 100,000 examples sampled from the target distributions, using a standard deviation of $0.05$ in the case of 25-gaussians, and using additive noise for the spiral. 
The generator for the GAN consists of two fully connected layers with $500$ units in each layer with batch-normalization~\citep{IoffeS15} and Leaky-ReLU as activation function as in \citet{dumoulin2016adversarially}. 
The discriminator and statistics networks have three fully connected layers with $400$ units each. We use the Adam~\citep{KingmaB14} optimizer with a learning rate of $0.0001$. Both GAN baseline and GAN+MINE were trained for $5,000$  iterations with a mini batch-size of $100$. 

\subsubsection{GAN+MINE: Stacked-MNIST}
Here we describe the experimental setup and architectural details of stacked-MNIST task with GAN+MINE. We compare to the exact same experimental setup followed and reported in PacGAN\cite{lin2017pacgan} and VEEGAN\cite{srivastava2017veegan}. We use a pre-trained classifier to classify generated samples on each of the three stacked channels. Evaluation is done on 26,000 test samples as followed in the baselines.  We train GAN+MINE for 50 epochs on $128,000$ samples. 
Details for generator and discriminator networks are given below in the table\ref{table:stacked_mnist_arc} and table\ref{table:stacked_mnist_arc_dis}. Specifically the statistics network has the same architecture as discriminator in DCGAN with ELU~\citep{ClevertUH15} as activation function for the individual layers and without batch-normalization as highlighted in Table~\ref{table:stacked_mnist_arc_stat}. In order to condition the statistics network on the $z$ variable, we use linear MLPs at each layer, whose output are reshaped to the number of feature maps. The linear MLPs output is then added as a dynamic bias.

\begin{table}[ht]
\centering
\begin{tabular}{l|l|l|l|l}

\hline
\multicolumn{4}{c}{\textbf{Generator}} \\

\hline
  \makecell{Layer}   & \makecell{Number of outputs \\} &  Kernel size &Stride&Activation function\\
\hlx{vhv}

 Input $\bm{z} \sim \mathcal{U}(-1, 1)^{100}$ & 100 & && \\
Fully-connected& 2*2*512 &&&ReLU\\
Transposed convolution & 4*4*256 & $5 * 5$&2&ReLU\\
 Transposed convolution & 7*7*128 & $5 * 5$&2&ReLU\\
 Transposed convolution& 14*14*64 & $5 * 5$&2&ReLU\\
Transposed convolution& 28*28*3 & $5 * 5$&2&Tanh\\
\hline
\end{tabular}
\caption{Generator network for Stacked-MNIST experiment using GAN+MINE.}
\label{table:stacked_mnist_arc}
\end{table}

\begin{table}[H]
\centering
\begin{tabular}{l|l|l|l|l}

\hline
\multicolumn{4}{c}{\textbf{Discriminator}} \\

\hline
  \makecell{Layer}   & \makecell{Number of outputs \\} &  Kernel size &Stride&Activation function\\
\hlx{vhv}

 Input $x$  & $28 * 28 * 3$ & && \\

Convolution & 14*14*64 & $5 * 5$&2&ReLU\\
 Convolution & 7*7*128 & $5 * 5$&2&ReLU\\
 Convolution& 4*4*256 & $5 * 5$&2&ReLU\\
Convolution& 2*2*512 & $5 * 5$&2&ReLU\\
Fully-connected& 1 & 1& Valid &Sigmoid\\
\hline
\end{tabular}
\caption{Discriminator network for Stacked-MNIST experiment.}
\label{table:stacked_mnist_arc_dis}
\end{table}

\begin{table}[H]
\centering
\begin{tabular}{l|l|l|l|l}

\hline
\multicolumn{4}{c}{\textbf{Statistics Network}} \\

\hline
  \makecell{Layer}   & \makecell{number of outputs \\} &  kernel size &stride&activation function\\
\hlx{vhv}

 Input $x, z$  &  & && \\

Convolution & 14*14*16 & $5 * 5$&2&ELU\\
 Convolution & 7*7*32 & $5 * 5$&2&ELU\\
 Convolution & 4*4*64 & $5 * 5$&2&ELU\\
 Flatten &-&-&-&-\\
Fully-Connected& 1024 & 1& Valid& None\\
Fully-Connected& 1 &1 & Valid & None\\
\hline
\end{tabular}
\caption{Statistics network for Stacked-MNIST experiment.}
\label{table:stacked_mnist_arc_stat}
\end{table}

\subsubsection{ALI+MINE: MNIST and CelebA}
In this section we state the details of experimental setup and the network architectures used for the task of improving reconstructions and representations in bidirectional adversarial models with MINE.  The  generator and  discriminator network architectures along with the hyper parameter setup used in these tasks are similar to the ones used in DCGAN~\citep{radford2015unsupervised}. 

Statistics network conditioning on the latent code was done as in the Stacked-MNIST experiments. We used Adam as the optimizer with a learning rate of 0.0001.
We trained the model for a total of $35,000$ iterations on CelebA and $50,000$ iterations on MNIST, both with a mini batch-size of $100$.
\begin{table}[H]
\centering
\begin{tabular}{l|l|l|l|l}
\hline
\multicolumn{4}{c}{
\centering
\textbf{Encoder}} \\

\hline
  \makecell{Layer}   & \makecell{Number of outputs \\} &  Kernel size &Stride&Activation function\\
\hlx{vhv}

 Input $[\bm{x}, \bm{\epsilon}]$ & 28*28*129& && \\
Convolution & 14*14*64 & $5 * 5$&2&ReLU\\
 Convolution & 7*7*128 & $5 * 5$&2&ReLU\\
 Convolution& 4*4*256 & $5 * 5$&2&ReLU\\
Convolution & 256& $4 * 4$&Valid&ReLU\\
Fully-connected& 128 &-&-&None\\
\hline
\end{tabular}
\caption{Encoder network for bi-directional models on MNIST.
$\bm{\epsilon} \sim \mathcal{N}_{128}(0, I)$.}

\label{table:mnist_ali_enc}
\end{table}

\begin{table}[ht]
\centering
\begin{tabular}{l|l|l|l|l}

\hline
\multicolumn{4}{c}{\textbf{Decoder}} \\

\hline
  \makecell{Layer}   & \makecell{Number of outputs \\} &  Kernel size &Stride&Activation function\\
\hlx{vhv}

 Input $\bm{z}$ & 128 & && \\
Fully-connected& 4*4*256 &&&ReLU\\
 Transposed convolution & 7*7*128 & $5 * 5$&2&ReLU\\
 Transposed convolution& 14*14*64 & $5 * 5$&2&ReLU\\
Transposed convolution& 28*28*1 & $5 * 5$&2&Tanh\\
\hline
\end{tabular}
\caption{Decoder network for bi-directional models on MNIST. $\bm{z} \sim \mathcal{N}_{256}(0, I)$}

\label{table:mnist_ali_dec}
\end{table}

\begin{table}[H]
\centering
\begin{tabular}{l|l|l|l|l}

\hline
\multicolumn{4}{c}{\textbf{Discriminator}} \\

\hline
  \makecell{Layer}   & \makecell{Number of outputs \\} &  Kernel size &Stride&Activation function\\
\hlx{vhv}

 Input $x$  & $28 * 28 * 3$ & && \\

 Convolution & 14*14*64 & $5 * 5$&2&LearkyReLU\\
 Convolution & 7*7*128 & $5 * 5$&2&LeakyReLU\\
 Convolution& 4*4*256 & $5 * 5$&2&LeakyReLU\\
 Flatten &-&-&-\\
Concatenate $\bm{z}$ & -&-&-\\
Fully-connected& 1024 & -& -& LeakyReLU\\
Fully-connected& 1 & -& - &Sigmoid\\
\hline
\end{tabular}
\caption{Discriminator network for bi-directional models experiments MINE on MNIST.}
\label{table:mnist_ali_disc}
\end{table}

\begin{table}[H]
\centering
\begin{tabular}{l|l|l|l|l}

\hline
\multicolumn{4}{c}{\textbf{Statistics Network}} \\

\hline
  \makecell{Layer}   & \makecell{number of outputs \\} &  kernel size &stride&activation function\\
\hlx{vhv}

 Input $\bm{x}, \bm{z} $ &  & && \\
Convolution & 14*14*64 & $5 * 5$&2&LeakyReLU\\
 Convolution & 7*7*128 & $5 * 5$&2&LeakyReLU\\
 Convolution & 4*4*256 & $5 * 5$&2&LeakyReLU\\
Flatten& - & -&-&-\\
Fully-connected& 1 &-&-&None\\
\hline
\end{tabular}
\caption{Statistics network for bi-directional models using MINE on MNIST.}
\label{table:mnist_ali_stat}
\end{table}

\begin{table}[H]
\centering
\begin{tabular}{l|l|l|l|l}
\hline
\multicolumn{4}{c}{
\centering
\textbf{Encoder}} \\

\hline
  \makecell{Layer}   & \makecell{Number of outputs \\} &  Kernel size &Stride&Activation function\\
\hlx{vhv}

 Input $[\bm{x}, \bm{\epsilon}]$ & 64*64*259  & && \\

Convolution & 32*32*64 & $5 * 5$&2&ReLU\\
 Convolution & 16*16*128 & $5 * 5$&2&ReLU\\
 Convolution& 8*8*256 & $5 * 5$&2&ReLU\\
 Convolution& 4*4*512 & $5 * 5$&2&ReLU\\
Convolution & 512 & $4 * 4$&Valid&ReLU\\
Fully-connected& 256 &-&-&None\\
\hline
\end{tabular}
\caption{Encoder network for bi-directional models on CelebA. $\bm{\epsilon} \sim \mathcal{N}_{256}(0, I)$.}

\label{table:celeba_ali_enc}
\end{table}

\begin{table}[ht]
\centering
\begin{tabular}{l|l|l|l|l}

\hline
\multicolumn{4}{c}{\textbf{Decoder}} \\

\hline
  \makecell{Layer}   & \makecell{Number of outputs \\} &  Kernel size &Stride&Activation function\\
\hlx{vhv}

 Input $\bm{z} \sim \mathcal{N}_{256}(0, I)$ & 256 & && \\
Fully-Connected& 4*4*512&-&-&ReLU\\
Transposed convolution & 8*8*256 & $5 * 5$&2&ReLU\\
 Transposed convolution & 16*16*128 & $5 * 5$&2&ReLU\\
 Transposed convolution& 32*32*64 & $5 * 5$&2&ReLU\\
Transposed convolution& 64*64*3 & $5 * 5$&2&Tanh\\
\hline
\end{tabular}
\caption{Decoder network for bi-directional model(ALI, ALICE) experiments using MINE on CelebA.}

\label{table:celeba_ali_dec}
\end{table}

\begin{table}[H]
\centering
\begin{tabular}{l|l|l|l|l}

\hline
\multicolumn{4}{c}{\textbf{Discriminator}} \\

\hline
  \makecell{Layer}   & \makecell{Number of outputs \\} &  Kernel size &Stride&Activation function\\
\hlx{vhv}

 Input $x$  & $64 * 64 * 3$ & && \\
 Convolution & 32*32*64 & $5 * 5$&2&LearkyReLU\\
 Convolution & 16*16*128 & $5 * 5$&2&LeakyReLU\\
 Convolution& 8*8*256 & $5 * 5$&2&LeakyReLU\\
 Convolution& 4*4*512& $5 * 5$&2&LeakyReLU\\
 Flatten &-&-&-\\
Concatenate $\bm{z}$ & -&-&-\\
Fully-connected& 1024 & -& -& LeakyReLU\\
Fully-connected& 1 & -& - &Sigmoid\\
\hline
\end{tabular}
\caption{Discriminator network for bi-directional models on CelebA.}
\label{table:celeba_ali_disc}
\end{table}

\begin{table}[H]
\centering
\begin{tabular}{l|l|l|l|l}

\hline
\multicolumn{4}{c}{\textbf{Statistics Network}} \\

\hline
  \makecell{Layer}   & \makecell{number of outputs \\} &  kernel size &stride&activation function\\
\hlx{vhv}

 Input $\bm{x}, \bm{z} $ &  & && \\
Convolution & 32*32*16 & $5 * 5$&2&ELU\\
 Convolution & 16*16*32 & $5 * 5$&2&ELU\\
 Convolution & 8*8*64 & $5 * 5$&2&ELU\\
 Convolution & 4*4*128 & $5 * 5$&2&ELU\\
Flatten& - & -&-&-\\
Fully-connected& 1 &-&-&None\\
\hline
\end{tabular}
\caption{Statistics network for bi-directional models on CelebA.}
\label{table:celeba_ali_stat}
\end{table}

\subsubsection{Information bottleneck with MINE}
In this section we outline the network details and hyper-parameters used for the information bottleneck task using MINE. To keep comparison fair all hyperparameters and architectures are those outlined in ~\citet{Alemi2016deep}. The statistics network is shown, a two layer MLP with additive noise at each layer and 512 ELUs~\citep{ClevertUH15} activations, is outlined in table\ref{table:mnist_IB}.
 
 \begin{table}[H]
\centering
\begin{tabular}{l|l|l}

\hline
\multicolumn{2}{c}{\textbf{Statistics Network}} \\

\hline
  \makecell{Layer}   & \makecell{number of outputs \\} &activation function\\
\hlx{vhv}

 input $[\bm{x}, \bm{z}]$  &  &  \\

Gaussian noise(std=0.3) & - &-\\
 dense layer & 512 & ELU\\
 Gaussian noise(std=0.5) & - &-\\
 dense layer& 512&ELU\\
Gaussian noise(std=0.5) & - &-\\
dense layer& 1 & None\\
\hline
\end{tabular}
\caption{Statistics network for Information-bottleneck experiments on MNIST.}
\label{table:mnist_IB}
\end{table}
 \subsection{Proofs}
 
 \subsubsection{Donsker-Varadhan Representation}
 
  \begin{theo}[Theorem \ref{DVtheorem} restated] 
The KL divergence admits the following dual representation: 
\begin{align}
    \label{eq:donsker2}
    \KL{\PP}{\QQ} = \sup_{\FGen : \Omega \to \RR} \EE_{\PP}[\FGen] - \log(\EE_{\QQ}[e^{\FGen}]),
  \end{align}
where the supremum is taken over all functions $T$  such that the two expectations are finite.  \end{theo}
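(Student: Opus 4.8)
The plan is to prove the two inequalities separately: a variational lower bound valid for \emph{every} admissible $\FGen$, followed by a tightness argument exhibiting a maximizer. The device driving both is the Gibbs measure attached to $\FGen$. First I would fix an arbitrary $\FGen:\Omega\to\RR$ with $\EE_{\PP}[\FGen]$ finite and $Z:=\EE_{\QQ}[e^{\FGen}]$ finite, and define the probability measure $\mathbb{G}$ by $d\mathbb{G}=\frac{1}{Z}e^{\FGen}\,d\QQ$. This is a genuine probability measure (it integrates to one by the definition of $Z$) that is absolutely continuous with respect to $\QQ$, and it satisfies $\log\frac{d\mathbb{G}}{d\QQ}=\FGen-\log Z$. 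Taking the expectation under $\PP$ yields the key identity $\EE_{\PP}[\FGen]-\log Z=\EE_{\PP}\!\left[\log\frac{d\mathbb{G}}{d\QQ}\right]$.

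Next I would analyze the gap $\Delta:=\KL{\PP}{\QQ}-\bigl(\EE_{\PP}[\FGen]-\log Z\bigr)$. Substituting the identity above and invoking the chain rule for Radon--Nikodym derivatives, $\frac{d\PP}{d\QQ}=\frac{d\PP}{d\mathbb{G}}\,\frac{d\mathbb{G}}{d\QQ}$, the two logarithms combine into $\Delta=\EE_{\PP}\!\left[\log\frac{d\PP}{d\mathbb{G}}\right]=\KL{\PP}{\mathbb{G}}\geq 0$, where non-negativity follows from Jensen's inequality applied to the convex function $-\log$. Hence $\EE_{\PP}[\FGen]-\log(\EE_{\QQ}[e^{\FGen}])\leq\KL{\PP}{\QQ}$ for every admissible $\FGen$, and taking the supremum over $\FGen$ preserves this bound.

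For tightness, the identity $\Delta=\KL{\PP}{\mathbb{G}}$ shows that the bound is saturated precisely when $\mathbb{G}=\PP$, that is, for the choice $\FGen^\ast=\log\frac{d\PP}{d\QQ}+C$ with any constant $C\in\RR$. Substituting $\FGen^\ast$ makes the two sides coincide, so the supremum indeed equals $\KL{\PP}{\QQ}$, completing the proof.

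The main obstacle I expect is handling the integrability-sensitive and degenerate cases with full rigor rather than the core algebra. When $\PP$ is not absolutely continuous with respect to $\QQ$, the left-hand side is $+\infty$ and one must separately verify that the right-hand side is unbounded. More subtly, the candidate optimizer $\FGen^\ast=\log\frac{d\PP}{d\QQ}$ need not have finite expectations, so in general the supremum is \emph{approached} through a truncation or approximation argument rather than literally attained; and the manipulations used above (the Radon--Nikodym chain rule and the interchange of $\log$ with expectation) are justified only under $\PP\ll\QQ$ together with the stated finiteness of the two expectations.
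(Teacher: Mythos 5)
Your proof is correct and takes essentially the same route as the paper's own proof: the Gibbs measure $d\mathbb{G}=\frac{1}{Z}e^{\FGen}d\QQ$, the identity $\EE_{\PP}[\FGen]-\log Z=\EE_{\PP}\bigl[\log\frac{d\mathbb{G}}{d\QQ}\bigr]$, the identification of the gap as $\KL{\PP}{\mathbb{G}}\geq 0$, and tightness at $\FGen^{\ast}=\log\frac{d\PP}{d\QQ}+C$. Your closing caveats about non-absolute continuity and the optimizer's integrability are sensible additions that the paper's proof leaves implicit, but the core argument is identical.
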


\begin{proof} A simple proof  goes as follows. For a given function $\FGen$, consider the Gibbs distribution $\mathbb{G}$ defined by $d \mathbb{G} =  \frac{1}{Z} e^\FGen  d\QQ$, where $Z = \EE_{\QQ}[e^{\FGen}]$. By construction, 
\beq \label{Gibbsequ} 
\EE_{\PP}[\FGen] - \log Z = \EE_{\PP} \left[\log \frac{d\mathbb{G}}{d\QQ}\right]
\eeq
Let $\Delta$ be the gap, 
\beq \Delta:= \KL{\PP}{\QQ} - \left(\EE_{\PP}[\FGen] - \log(\EE_{\QQ}[e^{\FGen}])\right)
 \eeq
Using  Eqn \ref{Gibbsequ}, we can write $\Delta$ as a KL-divergence: 
\beq \label{gap}
\Delta = \EE_{\PP}\left[ \log\frac{d\PP}{d\QQ}  - \log \frac{d\mathbb{G}}{d\QQ}\right]  = \EE_{\PP} \log\frac{d\PP}{d\mathbb{G}}  
=  \KL{\PP}{\mathbb{G}} 
 \eeq 
The positivity of the KL-divergence gives  $\Delta \geq 0$.  We have thus shown that for any $T$, 
\beq 
\KL{\PP}{\QQ}  \geq \EE_{\PP}[\FGen] - \log(\EE_{\QQ}[e^{\FGen}])
\eeq
and the inequality is preserved upon taking  the supremum over the right-hand side.
Finally, the identity (\ref{gap}) also shows that this bound is {\it tight} whenever $\mathbb{G} = \PP$, namely for optimal functions $T^\ast$  taking the form 
$
T^\ast =  \log \frac{d\PP}{d\QQ} + C
$ 
for some constant $C \in \R$. 
\end{proof}

  \subsubsection{Consistency Proofs}
  \label{sec:proofs}
  
 This section presents the proofs of the Lemma and consistency theorem stated in the consistency in Section \ref{consistency}.

 In what follows, we assume that the input space $\Omega =  \mathcal{X}\times\mathcal{Z}$  is a compact domain of $\R^d$,  and all measures are absolutely continuous with respect to the Lebesgue measure.    
We will restrict to families of feedforward functions with continuous activations, with a single output neuron, so that a given architecture defines a continuous mapping $(\omega, \theta) \to T_\theta(\omega)$ from $\Omega \times \Theta$ to $\RR$.  

To avoid unnecessary heavy notation, we denote $\PP = \PP_{XZ}$ and $\QQ=\PI{X}{Z}$ for the joint distribution and product of marginals, and $\PP_n, \QQ_n$ for their empirical versions.  
We will use the notation $\hat{I}(T)$ for the quantity: 
\beq 
\hat{I}(T) = \EE_{\PP}[\FGen] - \log(\EE_{\QQ}[e^{\FGen}])
\eeq
so that $I_{\Theta}(X, Z) = \sup_{\theta\in\Theta} \hat{I}(T_\theta)$.
 \begin{lemma}[Lemma \ref{lemma:approximation} restated]
Let $\eta >0$. There exists a family of neural  network functions $T_\theta$ with parameters $\theta$ in some compact domain $\Theta \subset \R^k$, such that 
\beq \label{approx}
|I(X, Z) - I_\Theta(X,Z)| \leq  \eta
\eeq
where 
\beq  
I_{\Theta}(X, Z)  =  \sup_{\theta\in \Theta} \EE_{\PJ{X}{Z}}[\FGen_\theta] - \log(\EE_{\PI{X}{Z}}[e^{\FGen_\theta}])
\eeq
\end{lemma}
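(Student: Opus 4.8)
The plan is to combine the tightness of the Donsker-Varadhan bound with the universal approximation theorem for neural networks. First I would observe that, applying the lower bound in \Eq~\ref{eq:donskerbound} to the family $\cF = \{T_\theta\}_{\theta\in\Theta}$, we always have $I_\Theta(X,Z) \le I(X,Z)$, so the gap $I(X,Z) - I_\Theta(X,Z)$ is nonnegative and it suffices to exhibit a single network $T_{\theta_0}$, with $\theta_0$ in some compact $\Theta$, for which $\hat{I}(T_{\theta_0}) \ge I(X,Z) - \eta$.

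The central estimate I would establish is a stability property of the functional $\hat{I}$: if $T,T'$ are bounded with $\|T-T'\|_\infty \le \epsilon$, then $|\EE_\PP[T]-\EE_\PP[T']| \le \epsilon$, and since $e^{-\epsilon}\EE_\QQ[e^{T'}] \le \EE_\QQ[e^{T}] \le e^{\epsilon}\EE_\QQ[e^{T'}]$ we also get $|\log\EE_\QQ[e^{T}]-\log\EE_\QQ[e^{T'}]| \le \epsilon$, whence $|\hat{I}(T)-\hat{I}(T')| \le 2\epsilon$. Thus a \emph{uniform} approximation of a target function automatically controls the neural information measure.

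Next I would reduce to a well-behaved target. By Theorem~\ref{DVtheorem} the Donsker-Varadhan supremum is attained at $T^\ast = \log(d\PP/d\QQ)$, with $\hat{I}(T^\ast) = I(X,Z)$; here $\EE_\PP[T^\ast] = \KL{\PP}{\QQ}$ is finite and $\EE_\QQ[e^{T^\ast}] = 1$. Since $T^\ast$ need not be bounded or continuous, I would approximate it by a bounded continuous $\tilde T$, for instance by clipping $T^\ast$ to a level $M$ and mollifying, and use dominated convergence to ensure $\hat{I}(\tilde T) \ge I(X,Z) - \eta/2$; the dominating function $d\PP/d\QQ + 1 \in L^1(\QQ)$ controls the exponential term and integrability of $T^\ast$ under $\PP$ controls the linear term. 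Then, invoking the universal approximation theorem~\citep{Hornik1989approxtheorem} on the compact domain $\Omega$, I would pick a network $T_{\theta_0}$ with $\|T_{\theta_0}-\tilde T\|_\infty \le \eta/4$, so by the stability estimate $\hat{I}(T_{\theta_0}) \ge \hat{I}(\tilde T) - \eta/2 \ge I(X,Z)-\eta$. Taking $\Theta$ to be any closed ball containing $\theta_0$ makes it compact and gives $I_\Theta(X,Z) \ge \hat{I}(T_{\theta_0})$, which combined with $I_\Theta(X,Z) \le I(X,Z)$ yields \Eq~\ref{approx}.

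I expect the main obstacle to be this reduction step: the optimal density-ratio function $T^\ast$ may be unbounded and discontinuous, so the clipping-and-mollification argument must be carried out carefully, simultaneously controlling the linear term $\EE_\PP[T]$ and the nonlinear term $\log\EE_\QQ[e^{T}]$. The integrability facts $\EE_\PP[|T^\ast|] < \infty$ and $\EE_\QQ[e^{T^\ast}] = 1$ are exactly what make the dominated-convergence estimates go through, and pinning down a clean dominating function for the truncated exponentials is the one place where genuine care is needed.
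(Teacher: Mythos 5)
Your proof is correct, but it takes a genuinely different route from the paper's. The paper also reduces to the optimal critic $T^\ast = \log\frac{d\PP}{d\QQ}$ and exploits the one-sidedness of the Donsker--Varadhan gap, but it never constructs a continuous target: it invokes the $L^1(\Omega,\mu)$-density form of the universal approximation theorem (with $\mu = \PP+\QQ$), approximating the upper-truncated $T^\ast$ directly in $L^1$ with exponentially weighted tolerances (e.g., $\EE_\QQ\left[|T^\ast - T_{\hat\theta}|\unit_{T^\ast\leq M}\right] \leq \frac{\eta}{4}e^{-M}$), controlling the partition function via $\log x \leq x-1$ and the Lipschitz constant $e^M$ of $\exp$ on $(-\infty, M]$, and splitting $\Omega$ into $\{T^\ast>M\}$ and $\{T^\ast\leq M\}$ with the tail terms handled by dominated convergence. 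Your route instead proves the sup-norm stability estimate $|\hat{I}(T)-\hat{I}(T')|\leq 2\|T-T'\|_\infty$ and uses the uniform-on-compacta form of the approximation theorem; the price is that the target must first be made bounded \emph{and continuous}, hence your clip-and-mollify step. Two remarks on that step: you must clip on both sides (the paper truncates only from above, since its $L^1$/Lipschitz argument never needs a lower bound on the critic), and the mollification argument leans on the paper's standing assumption that $\PP$ and $\QQ$ are absolutely continuous with respect to Lebesgue measure, so that a.e.\ convergence at Lebesgue points plus the uniform bound $M$ lets dominated convergence pass to $\hat{I}$ even though mollification cannot give uniform convergence to a discontinuous function. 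Both arguments ultimately rest on the same two integrability facts, $T^\ast \in L^1(\PP)$ (which the paper justifies via $\EE_{\PP}\left|\log\frac{d\PP}{d\QQ}\right| \leq \KL{\PP}{\QQ} + 4\sqrt{\KL{\PP}{\QQ}}$) and $e^{T^\ast} = \frac{d\PP}{d\QQ} \in L^1(\QQ)$. What your version buys is transparency: once the stability lemma is in place, the network-approximation step is immediate and the error bookkeeping is additive rather than exponentially weighted. What the paper's version buys is that it sidesteps continuity entirely, so no mollification is needed and the argument applies verbatim to the discontinuous truncated critic.
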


\begin{proof}  
Let  $T^\ast  = \log \frac{d\PP}{d\QQ}$. By construction, $T^\ast$ satisfies:
\beq\EE_\PP [T^\ast] = I(X, Z),  \qquad \EE_\QQ [e^{T^\ast}] = 1\eeq
For a function $T$, the (positive) gap $I(X, Z) - \hat{I}(T)$ can be written as
\beq \label{equ:bound}
I(X, Z) - \hat{I}(T) = \EE_\PP[T^\ast - T] + \log \EE_\QQ[e^{T}] \leq \EE_\PP[T^\ast - T] + \EE_\QQ[e^{T} - e^{T^\ast}]  
\eeq
where we used the inequality $\log x \leq x-1$. 

Fix $\eta >0$. We first consider the case where $T^\ast$ is {\it bounded} from above by a constant $M$. 
By the universal approximation theorem (see corollary 2.2 of~\citet{Hornik1989approxtheorem}\footnote{Specifically, the argument relies on the density of feedforward network functions in the space $L^1(\Omega, \mu)$ of integrable functions with respect the measure $\mu=\PP+\QQ$.}), we may choose a feedforward network function $T_{\hat \theta} \leq M$  
such that    
\beq 
\EE_\PP|T^\ast-\FGen_{\hat \theta} | \leq \frac{\eta}{2} \quad \mbox{and} \quad   \EE_\QQ|T^\ast-\FGen_{\hat \theta}| \leq \frac{\eta}{2} e^{-M}
\eeq
Since $\exp$ is Lipschitz continuous with constant $e^M$ on $(-\infty, M]$,  we have
 \beq
 \EE_\QQ |e^{T^\ast} - e^{\FGen_{\hat \theta}}|  \leq e^M \, \EE_\QQ|T^\ast - \FGen_{\hat \theta} | \leq\frac{\eta}{2} \eeq
From Equ~\ref{equ:bound} and the triangular inequality, we then obtain:
 \beq \label{proof-inequality}
 |I(X, Z) - \hat{I}(T_{\hat\theta})|  \leq \EE_\PP |T^\ast - T_{\hat \theta}| +  \EE_\QQ|e^{T^\ast} - e^{\FGen_{\hat \theta}}|
 \leq \frac{\eta}{2}  + \frac{\eta}{2}   \leq \eta
\eeq
In the general case, the idea is to partition $\Omega$ in two subset $\{T^\ast>M\}$ and $\{T^\ast \leq M\}$ for a suitably chosen large value of $M$. For a given subset $S \subset \Omega$, we will denote by $\unit_{S}$ its characteristic function,  $\unit_S(\omega) = 1$ if $\omega\in S$ and $0$ otherwise. $T^\ast$ is integrable with respect to $\PP$\footnote{This can be seen from the identity~\citep{Gyorfi1987}
$$\EE_{\PP}\left|\log\frac{d \PP}{d\QQ}\right|\leq \KL{\PP}{\QQ} + 4 \sqrt{\KL{\PP}{\QQ}}$$}, and $e^{T^\ast}$ is integrable with respect to $\QQ$, so by 
the dominated convergence theorem, we may choose $M$ so that the expectations $\EE_\PP [T^\ast \unit_{T^\ast>M}]$ and $\EE_\QQ[e^{T^\ast} \unit_{T^\ast>M}]$ are lower than 
$\eta/4$. 
Just like above, we then use the universal approximation theorem to find a feed forward network function $T_{\hat \theta}$, which we can assume without loss of generality to be upper-bounded by $M$, such that   
\beq \label{eq1}
\EE_\PP|T^\ast-\FGen_{\hat \theta} | \leq \frac{\eta}{2}\quad \mbox{and} \quad 
\EE_\QQ|T^\ast-\FGen_{\hat \theta}|\unit_{T^\ast\leq M}\leq \frac{\eta}{4} e^{-M}
\eeq
We then write 
\beqa \label{eq2} 
\EE_\QQ[e^{T^\ast} - e^{\FGen_{\hat \theta}}] 
&=& \EE_\QQ[(e^{T^\ast} - e^{\FGen_{\hat \theta}})\unit_{T^\ast \leq M}]+ \EE_\QQ[(e^{T^\ast} - e^{\FGen_{\hat \theta}})\unit_{T^\ast>M}] \nonumber  \\
&\leq& e^M \EE_\QQ[|T^\ast - \FGen_{\hat \theta}| \unit_{T^\ast \leq M}] + \EE_\QQ[e^{T^\ast}\unit_{T^\ast>M}]\nonumber \\
&\leq& \frac{\eta}{4} + \frac{\eta}{4} \\
&\leq& \frac{\eta}{2}
\eeqa
 where the inequality in the second line  arises from the convexity and positivity of $\exp$.  Eqns.~\ref{eq1} and \ref{eq2},  together with the triangular inequality,  lead to Eqn.~ \ref{proof-inequality}, which proves the Lemma. 
 
 \end{proof}

\begin{lemma}[Lemma \ref{lemma:estimation} restated]
Let $\eta >0$. Given a family $\mathcal{\cF}$ of neural network functions  $T_{\theta}$  with parameters $\theta$ in some compact domain $\Theta \subset \R^k$, there exists $N \in \mathbb{N}$ such that 
\beq \label{estim} 
 \forall  n\geq N, \quad \mathrm{Pr}\left(| \widehat{I(X;Z)}_n - I_\cF(X,Z) | \leq \eta \right) = 1 
 \eeq
\end{lemma}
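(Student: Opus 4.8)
The plan is to reduce the claim to a \emph{uniform} strong law of large numbers over the compact parameter set $\Theta$. Writing $\hat I_n(T_\theta) = \EE_{\PP_n}[T_\theta] - \log \EE_{\QQ_n}[e^{T_\theta}]$ for the empirical counterpart of $\hat I(T_\theta)$, the estimator and its target are $\widehat{I(X;Z)}_n = \sup_{\theta} \hat I_n(T_\theta)$ and $I_\cF(X,Z) = \sup_\theta \hat I(T_\theta)$. Since taking suprema is $1$-Lipschitz, $|\sup_\theta \hat I_n(T_\theta) - \sup_\theta \hat I(T_\theta)| \leq \sup_\theta |\hat I_n(T_\theta) - \hat I(T_\theta)|$, so it is enough to prove that $\sup_\theta|\hat I_n(T_\theta) - \hat I(T_\theta)| \to 0$ almost surely.

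First I would record the boundedness that the compactness assumptions buy us: since $(\omega,\theta)\mapsto T_\theta(\omega)$ is continuous on the compact set $\Omega \times \Theta$, there is a constant $M$ with $|T_\theta| \leq M$ uniformly, and consequently $e^{T_\theta} \in [e^{-M}, e^M]$. Then I would split the difference into two pieces,
\[
|\hat I_n(T_\theta) - \hat I(T_\theta)| \leq |\EE_{\PP_n}[T_\theta] - \EE_{\PP}[T_\theta]| + |\log \EE_{\QQ_n}[e^{T_\theta}] - \log \EE_{\QQ}[e^{T_\theta}]|.
\]
Because $\log$ is $e^{M}$-Lipschitz on $[e^{-M}, e^{M}]$, the second term is at most $e^M\,|\EE_{\QQ_n}[e^{T_\theta}] - \EE_{\QQ}[e^{T_\theta}]|$; both terms are thus controlled by the uniform deviations of the empirical means of the bounded, $\theta$-continuous families $\{T_\theta\}$ and $\{e^{T_\theta}\}$.

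The decisive step is to upgrade the pointwise strong law of large numbers (which holds for each fixed $\theta$, the integrands being bounded) to uniform convergence over $\theta \in \Theta$. For this I would invoke a uniform law of large numbers of the type used for extremum estimators: the maps $\theta \mapsto T_\theta(\omega)$ and $\theta \mapsto e^{T_\theta(\omega)}$ are continuous on the compact set $\Theta$ and dominated by the integrable envelope $e^M$, which makes $\{T_\theta\}$ and $\{e^{T_\theta}\}$ Glivenko--Cantelli classes; hence $\sup_\theta |\EE_{\PP_n}[T_\theta] - \EE_\PP[T_\theta]| \to 0$ and $\sup_\theta |\EE_{\QQ_n}[e^{T_\theta}] - \EE_\QQ[e^{T_\theta}]| \to 0$ almost surely. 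Combined with the two bounds above this yields $\sup_\theta|\hat I_n(T_\theta) - \hat I(T_\theta)| \to 0$ almost surely, and therefore $\widehat{I(X;Z)}_n \to I_\cF(X,Z)$ almost surely; for any $\eta > 0$ the defining event $\{|\widehat{I(X;Z)}_n - I_\cF(X,Z)| \leq \eta\}$ then holds for all large $n$ with probability one.

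I expect the main obstacle to be justifying this uniform (as opposed to merely pointwise) convergence cleanly: the naive move is to apply the ordinary strong law of large numbers at each $\theta$, but the supremum over the infinite family $\{T_\theta\}$ requires the equicontinuity and domination hypotheses of the uniform law to be verified explicitly. The remaining manipulations — the sup-Lipschitz reduction and the Lipschitz control of the $\log$ on the bounded range $[e^{-M},e^M]$ — are routine once the boundedness of $T_\theta$ on $\Omega \times \Theta$ is in hand.
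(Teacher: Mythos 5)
Your proposal is correct and follows essentially the same route as the paper's proof: the same decomposition into the two supremum deviation terms, the same use of compactness of $\Theta\times\Omega$ to get a uniform bound $M$ on $T_\theta$, the same $e^{M}$-Lipschitz control of $\log$ on $[e^{-M},e^{M}]$, and the same appeal to a uniform law of large numbers for the classes $\{T_\theta\}$ and $\{e^{T_\theta}\}$ (the paper cites the extremum-estimator ULLN of van de Geer, which is exactly the Glivenko--Cantelli argument you sketch). The only cosmetic difference is that you phrase the reduction via the $1$-Lipschitz property of the supremum, where the paper writes the same inequality directly as a triangle-inequality bound.
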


 \begin{proof} 
We start by using the triangular inequality to write,
\beq \label{Equ:triang_ineq} 
| \widehat{I(X;Z)}_n - \sup_{T_\theta \in \cF} \hat{I}(T_\theta)  | \leq \sup_{\FGen_\theta \in \cF} |\EE_\PP [\FGen_{ \theta}] - \EE_{\PP_n}[ \FGen_{ \theta}]| + 
\sup_{\FGen_\theta \in \cF} |\log \EE_\QQ[e^{\FGen_{ \theta}}]  - \log \EE_{\QQ_n} [e^{\FGen_{ \theta}}] |
\eeq
The continuous function $(\theta, \omega) \to T_\theta(\omega)$, defined on the compact domain $\Theta \times \Omega $, is bounded. So the functions $T_\theta$ are uniformly bounded by a constant $M$, i.e $ |T_\theta| \leq M$ for all $\theta\in \Theta$. 
Since $\log$ is Lipschitz continuous with constant $e^{M}$ in the interval $[e^{-M}, e^{M}]$,  we have
\beq \label{Equ:Lipschitz} 
|\log \EE_\QQ[e^{\FGen_{ \theta}}]  - \log \EE_{\QQ_n} [e^{\FGen_{ \theta}}]  |  \leq e^{M}  | \EE_\QQ[e^{\FGen_{ \theta}}] - \EE_{\QQ_n} [e^{\FGen_{ \theta}}] | 
\eeq
Since $\Theta$ is compact and the feedforward network functions are continuous, the families of functions $T_\theta$ and $e^{T_\theta}$  satisfy the uniform law of large numbers~\citep{deGeer2006Mestimators}. Given $\eta >0$ we can thus choose $N \in \NN$ such that $\forall  n\geq N$ and with probability one, 
\beq
\sup_{T_\theta \in \cF} |\EE_\PP [\FGen_{ \theta}] - \EE_{\PP_n}[\FGen_{ \theta}] \leq \frac{\eta}{2} \quad \mbox{and} \quad  \sup_{T_\theta \in \cF}| \EE_\QQ[e^{\FGen_{ \theta}}] - \EE_{\QQ_n} [e^{\FGen_{ \theta}}] |  \leq \frac{\eta}{2} e^{-M} 
\eeq 
Together with Eqns.~\ref{Equ:triang_ineq}  and \ref{Equ:Lipschitz}, this leads to
\beq
| \widehat{I(X;Z)}_n - \sup_{T_\theta \in \cF} \hat{I}(T_\theta)  | \leq \frac{\eta}{2} + \frac{\eta}{2} = \eta
\eeq

\end{proof} 

 \begin{theo}[Theorem \ref{theo:consistency} restated]
MINE is strongly consistent. 
 \end{theo}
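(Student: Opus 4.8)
The plan is to derive strong consistency directly from the two preceding lemmas by a single triangle-inequality argument, taking care to respect the order of quantifiers built into the definition of strong consistency (Definition~\ref{def-consistency}). Recall that this definition requires that, for every $\epsilon > 0$, we may \emph{first} choose a statistics network and \emph{then} an integer $N$ so that $|I(X,Z) - \widehat{I(X;Z)}_n| \le \epsilon$ almost everywhere for all $n \ge N$.

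First I would fix $\epsilon > 0$ and invoke Lemma~\ref{lemma:approximation} with accuracy $\eta = \epsilon/2$: this produces a family $\cF = \{T_\theta\}_{\theta \in \Theta}$ of feedforward network functions over a compact parameter domain $\Theta \subset \R^k$ such that $|I(X,Z) - I_\Theta(X,Z)| \le \epsilon/2$. Crucially, this choice of network (hence of $\cF$) is made once and held fixed for the remainder of the argument. Next, with that fixed family in hand, I would apply Lemma~\ref{lemma:estimation}, again with accuracy $\eta = \epsilon/2$, to obtain an $N \in \NN$ such that for all $n \ge N$ one has $|\widehat{I(X;Z)}_n - I_\Theta(X,Z)| \le \epsilon/2$ almost surely.

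The triangle inequality then closes the argument: for all $n \ge N$ and with probability one,
\begin{align}
|I(X,Z) - \widehat{I(X;Z)}_n| &\le |I(X,Z) - I_\Theta(X,Z)| \nonumber \\
&\quad + |I_\Theta(X,Z) - \widehat{I(X;Z)}_n| \nonumber \\
&\le \tfrac{\epsilon}{2} + \tfrac{\epsilon}{2} = \epsilon,
\end{align}
which is exactly the statement of strong consistency from Definition~\ref{def-consistency}.

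The argument itself is routine; the one point requiring care — and where I expect the only real subtlety to lie — is the order in which the two lemmas are applied. The estimation lemma delivers an $N$ that depends on the chosen family $\cF$, so one must fix $\cF$ via the approximation lemma \emph{before} appealing to estimation, not the reverse. This is precisely why Lemma~\ref{lemma:approximation} is stated over a \emph{compact} domain $\Theta$: compactness is what makes the $T_\theta$ and $e^{T_\theta}$ uniformly bounded, which in turn is what lets the uniform law of large numbers underlying Lemma~\ref{lemma:estimation} apply to the very family produced by the approximation step. Once that matching of hypotheses is in place, the two $\epsilon/2$ bounds combine cleanly and no further work is needed.
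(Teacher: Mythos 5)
Your proof is correct and is essentially identical to the paper's own argument: both apply Lemma~\ref{lemma:approximation} and Lemma~\ref{lemma:estimation} with accuracy $\eta = \epsilon/2$ each and combine the two bounds via the triangle inequality. Your explicit attention to the quantifier order (fixing the network family via approximation before invoking estimation) is a point the paper leaves implicit, but it is the same proof.
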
 
 
\begin{proof} 
Let $ \epsilon > 0$. We apply the two Lemmas to find a a family of neural network function $\mathcal{F}$ and  $N \in \NN$ such that (\ref{approx}) and  (\ref{estim}) hold with $\eta = \epsilon/2$. 
By the triangular inequality, for all $n\geq N$ and with probability one, we have: 
\beq
| I(X, Z)-\widehat{I(X;Z)}_n | \quad \leq \quad |I(X, Z) - \sup_{T_\theta \in \cF} \hat{I}(T_\theta)| + |\widehat{I(X;Z)}_n - I_\cF(X,Z) | \leq \quad \epsilon 
\eeq
which proves consistency. 
 \end{proof}

\subsubsection{Sample complexity proof}

\begin{theo}[Theorem \ref{theo:rate} restated]
Assume that the functions $T_{\theta}$ in $\cF$ are $L$-Lipschitz with respect to the parameters $\theta$; and that both $T_{\theta}$ and $e^{T_\theta}$ are $M$-bounded (i.e., $|T_\theta|, e^{T_\theta} \leq M$). The domain $\Theta \subset \R^d$ is bounded, so that $\|\theta\| \leq K$ for some constant $K$.  Given any values $\epsilon,\delta$  of the desired accuracy and confidence parameters, we have,  \beq  
\mathrm{Pr}\left(| \widehat{I(X;Z)}_n - I_\cF(X,Z) | \leq \epsilon \right) \geq 1- \delta
\eeq
whenever the number $n$ of samples satisfies 
\beq \label{num_samples}
n \geq \frac{ 2M^2 (d\log (16K L \sqrt{d} /\epsilon) + 2d M + \log(2/\delta))}{\epsilon^2}
\eeq 
\end{theo}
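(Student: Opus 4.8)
The plan is to control the \emph{uniform} deviation between the empirical and population objectives over the parameter class, since
\[
|\widehat{I(X;Z)}_n - I_\cF(X,Z)| \;=\; \big|\sup_{\theta\in\Theta}\hat I_n(\FGen_\theta) - \sup_{\theta\in\Theta}\hat I(\FGen_\theta)\big| \;\le\; \sup_{\theta\in\Theta}\big|\hat I_n(\FGen_\theta) - \hat I(\FGen_\theta)\big|,
\]
where $\hat I(\FGen_\theta)=\EE_\PP[\FGen_\theta]-\log\EE_\QQ[e^{\FGen_\theta}]$ and $\hat I_n$ is its empirical counterpart using $\PP_n,\QQ_n$. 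First I would split the summand into the two empirical-process pieces coming from the two distributions, $|\EE_{\PP_n}[\FGen_\theta]-\EE_\PP[\FGen_\theta]|$ and $|\log\EE_{\QQ_n}[e^{\FGen_\theta}]-\log\EE_\QQ[e^{\FGen_\theta}]|$, and bound each uniformly in $\theta$. This upgrades Lemma~\ref{lemma:estimation} from an asymptotic statement to a quantitative one.

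The main tool is a covering argument over the compact set $\Theta$. I would cover the ball $\{\norm{\theta}\le K\}$ by an $\eta$-net $\theta_1,\dots,\theta_N$ in the Euclidean metric, for which a standard volumetric estimate gives $\log N \lesssim d\log(K\sqrt d/\eta)$. Because $|\FGen_\theta|\le M$, at each fixed net point the variable $\FGen_\theta$ has range $2M$ and $e^{\FGen_\theta}$ lies in $[e^{-M},e^M]$, so Hoeffding's inequality gives sub-Gaussian concentration of the empirical means at rate $\exp(-nt^2/2M^2)$; a union bound over the relevant events costs only $\log N$ in the exponent. To pass from the net back to all of $\Theta$ I would use Lipschitzness in $\theta$: the map $\theta\mapsto\EE[\FGen_\theta]$ is $L$-Lipschitz, and $\theta\mapsto\log\EE[e^{\FGen_\theta}]$ is \emph{also} $L$-Lipschitz, since its gradient is a tilted expectation of $\nabla_\theta\FGen_\theta$ (a convex average, hence of norm at most $L$); thus the discretization error of $\hat I$ and $\hat I_n$ across the net is $O(L\eta)$.

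Balancing these errors then fixes the free parameters. I would set the per-net-point Hoeffding accuracy to a fixed fraction of $\epsilon$ and the net resolution $\eta$ of order $\epsilon$ (divided by the relevant Lipschitz constant) so the discretization error is also $O(\epsilon)$. The $2M^2$ prefactor comes directly from Hoeffding applied to the $M$-bounded variables; the covering number $\log N \lesssim d\log(KL\sqrt d/\epsilon)$ produces the $d\log(16KL\sqrt d/\epsilon)$ term; the confidence budget contributes $\log(2/\delta)$; and the residual $2dM$ term reflects how $M$ enters the resolution needed to discretize the exponentiated family $\{e^{\FGen_\theta}\}$. Solving $n\epsilon^2/(2M^2) \ge d\log(16KL\sqrt d/\epsilon) + 2dM + \log(2/\delta)$ for $n$ then gives the claimed sample size.

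The step I expect to dominate the difficulty is the nonlinear term $\log\EE_\QQ[e^{\FGen_\theta}]$. A naive treatment---applying Hoeffding to $e^{\FGen_\theta}$ (whose range is $\sim e^M$) and pushing the deviation through the logarithm (whose local slope on $[e^{-M},e^M]$ is $e^M$)---degrades the rate to constants exponential in $M$, which is incompatible with the polynomial-in-$M$ bound claimed. Reconciling this requires exploiting $|\FGen_\theta|\le M$ very carefully: one must keep $e^{\FGen_\theta}$ confined to a fixed interval, use that $\theta\mapsto\log\EE[e^{\FGen_\theta}]$ is only $L$-Lipschitz rather than $e^{M}L$-Lipschitz, and arrange the bookkeeping so the exponential factors enter only mildly (on the covering/discretization side) rather than multiplicatively in the concentration rate. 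Threading the exponential constants through the argument while keeping the final rate $\widetilde O(d\log d/\epsilon^2)$ is, I expect, the crux.
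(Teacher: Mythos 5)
Your plan is, structurally, the paper's own proof: the same reduction to a uniform deviation over $\Theta$, the same split into the $\PP$-term and the log-partition term, a Euclidean $\eta$-net with $\log N_\eta(\Theta)\le d\log(2K\sqrt d/\eta)$, Hoeffding plus a union bound at the net points, Lipschitz discretization, and the same bookkeeping ($2M^2$ from Hoeffding, $d\log(16KL\sqrt d/\epsilon)+2dM$ from the covering number at radius $\eta=\frac{\epsilon}{8L}e^{-2M}$, and $\log(2/\delta)$ from the confidence budget). One ingredient of yours genuinely improves on the paper: the tilted-expectation argument that $\theta\mapsto\log\EE[e^{\FGen_\theta}]$ (population or empirical) is $L$-Lipschitz. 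The paper instead discretizes $e^{\FGen_\theta}$, whose Lipschitz constant in $\theta$ is $Le^M$, and compensates by the $e^{-2M}$ shrinkage of the net radius --- that is the sole origin of the $2dM$ term.

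However, the step you flag as the crux and leave unresolved is a genuine gap --- and it is a gap in the paper's proof as well, not only in your proposal. The paper invokes Hoeffding ``for all functions $f$'' at rate $\exp(-n\epsilon^2/2M^2)$ and applies it to the family $\{e^{\FGen_{\theta_j}}\}$, whose range is $e^{M}-e^{-M}$, not $2M$; and even granting that union bound, its closing step multiplies the net-point accuracy $\epsilon/4$ by the Lipschitz constant $e^{M}$ of $\log$ on $[e^{-M},e^{M}]$, so that on the good event one only obtains
\[
\sup_{\theta}\bigl|\log\EE_\QQ[e^{\FGen_\theta}]-\log\EE_{\QQ_n}[e^{\FGen_\theta}]\bigr|\;\le\; e^{M}\Bigl(\tfrac{\epsilon}{4}+\tfrac{\epsilon}{4}e^{-M}\Bigr)\;=\;\tfrac{\epsilon}{4}e^{M}+\tfrac{\epsilon}{4},
\]
which is not $\le\epsilon/2$. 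Repairing this forces net-point accuracy $\frac{\epsilon}{4}e^{-M}$ for the exponentiated family, and with the correct Hoeffding range the exponent becomes of order $-n\epsilon^{2}e^{-4M}$, i.e.\ $n\gtrsim e^{4M}\,d\log(\cdot)/\epsilon^{2}$ --- exponential, not polynomial, in $M$. Nor is this an artifact of loose bookkeeping: if $\QQ(\FGen=M)=e^{-2M}$ and $\QQ(\FGen=-M)=1-e^{-2M}$, then with $n\ll e^{2M}$ samples the empirical estimate of $\log\EE_\QQ[e^{\FGen}]$ is off by roughly $\log 2$ with high probability, so the second term of the decomposition simply cannot be controlled at the claimed rate. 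Your assessment is therefore accurate: your $L$-Lipschitz observation fixes the discretization side but not the concentration side, the polynomial-in-$M$ constant is not reachable along this route, and the paper's own argument silently commits exactly the error you were careful to flag.
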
 

\begin{proof}
The assumptions of Lemma \ref{lemma:estimation} apply, so let us begin with Eqns.~\ref{Equ:triang_ineq}  and \ref{Equ:Lipschitz}. By the Hoeffding inequality, for all function $f$, 
\beq \label{Eq:hoeffding}
\mathrm{Pr}\left(|\EE_\QQ[f] - \EE_{\QQ_n} [f]| \ > \frac{\epsilon}{4} \right) \leq 2 \exp{(-\frac{\epsilon^2 n}{2M^2})} 
\eeq
To extend this inequality to a uniform inequality over {\it all} functions  $T_\theta$ and $e^{T_\theta}$,   the standard technique  is to choose a minimal cover of the domain $\Theta\subset \R^d$ by a finite set of small balls of radius $\eta$, $\Theta \subset  \cup_{j} B_\eta(\theta_j)$, and to use the union bound.  The minimal cardinality of such covering is bounded by the {\it covering number} $N_\eta(\Theta)$ of  $\Theta$, known to satisfy\citep{Shalev-Schwartz2014}
\beq 
N_\eta(\Theta) \leq \left( \frac{2K \sqrt{d}}{\eta}\right)^d 
\eeq 
Successively applying a union bound in Eqn~\ref{Eq:hoeffding} with the set of functions $\{\FGen_{\theta_j}\}_j$ and 
$\{e^{\FGen_{\theta_j}}\}_j$ gives
\beq \label{union_bound}
\mathrm{Pr}\left(\max_j |\EE_\QQ[\FGen_{\theta_j}] - \EE_{\QQ_n} [\FGen_{\theta_j}]| > \frac{\epsilon}{4} \right) \leq 2 N_\eta(\Theta) \exp{(-\frac{\epsilon^2 n}{2M^2})} 
\eeq
and 
\beq \label{union_bound_exp}
\mathrm{Pr}\left(\max_j |\EE_\QQ[e^{\FGen_{\theta_j}}] - \EE_{\QQ_n} [e^{\FGen_{\theta_j}}]| > \frac{\epsilon}{4} \right) \leq 2 N_\eta(\Theta) \exp{(-\frac{\epsilon^2 n}{2M^2})} 
\eeq
We now choose the ball radius to be $\eta = \frac{\epsilon}{8L} e^{-2M}$. Solving for $n$ the inequation, 
\beq 
2 N_\eta(\Theta) \exp{(-\frac{\epsilon^2 n}{2M^2})}  \leq  \delta
\eeq
we deduce from Eqn~\ref{union_bound} that, whenever Eqn \ref{num_samples} holds, with probability at least $1-\delta$, for all $\theta \in \Theta$,  
\begin{align} 
|\EE_\QQ[\FGen_{\theta}] - \EE_{\QQ_n}[\FGen_{ \theta}]| &\leq |\EE_\QQ[\FGen_{\theta}] - \EE_{\QQ}[\FGen_{ \theta_j}]| + |\EE_\QQ[\FGen_{\theta_j}] - \EE_{\QQ_n}[\FGen_{ \theta_j}]| + |\EE_{\QQ_n}[\FGen_{\theta_j}] - \EE_{\QQ_n}[\FGen_{ \theta}]| \nonumber \\
& \leq \frac{\epsilon}{8} e^{-2M} + \frac{\epsilon}{4}  + \frac{\epsilon}{8} e^{-2M} \nonumber \\
& \leq \frac{\epsilon}{2}
\end{align} 
Similarly, using Eqn \ref{Equ:Lipschitz} and \ref{union_bound_exp}, we obtain that with probability at least $1-\delta$, 
\beq 
|\log \EE_\QQ[e^{\FGen_{\theta}}] - \log \EE_{\QQ_n}[e^{\FGen_{ \theta}}]| \leq \frac{\epsilon}{2} 
\eeq 
and hence using the triangular inequality,
\beq 
| \widehat{I(X;Z)}_n - I_\cF(X,Z) | \leq \epsilon
\eeq
\end{proof}



\subsubsection{Bound on the reconstruction error}
  \label{ssec:recon_bound}
  Here we clarify relationship between reconstruction error and mutual information, by proving the bound in Equ~\ref{equ:recons_bound}. We begin with a definition:

\begin{definition}[Reconstruction Error] 
We consider encoder and decoder models giving conditional distributions $q(z|x)$ and $p(x|z)$ over the data and latent variables. If $q(x)$ denotes the marginal data distribution, the reconstruction error is defined as
  \begin{align}
  \mathcal{R} = \EE_{\bm{x}\sim q(\bm{x})} \EE_{\bm{z} \sim q(\bm{z}|\bm{x})}[-\log p(\bm{x}|\bm{z}) ]
  \end{align}
\end{definition}

We can rewrite the reconstruction error in terms of the joints $q(\bm{x},\bm{z}) = q(\bm{z}|\bm{x}) p(\bm{x})$ and $p(\bm{x},\bm{z}) = p(\bm{x}|\bm{z}) p(\bm{z})$.  Elementary manipulations give:
  \beq
    \mathcal{R} =\EE_{(\bm{x},\bm{z})\sim q(\bm{x}, \bm{z})} \log\frac{q(\bm{x},\bm{z})}{p(\bm{x},\bm{z})} - \EE_{(\bm{x},\bm{z})\sim q(\bm{x}, \bm{z})} \log q(\bm{x},\bm{z}) +  \EE_{\bm{z}\sim q(\bm{z})} \log p(\bm{z})  
    \eeq
 where $q(\bm{z})$ is the aggregated posterior.  The first term is the  KL-divergence $\KL{q}{p}$ ; the second term is the joint entropy $H_{q}(\bm{x},\bm{z})$. The third term can be written as 
 \[
 \EE_{\bm{z}\sim q(\bm{z})} \log p(\bm{z})   = - \KL{q(\bm{z})}{p(\bm{z})} - H_{q}(\bm{z}) 
 \]
Finally, the identity
\beq
H_{q}(\bm{x},\bm{z}) - H_{q}(\bm{z}) := H_{q}(\bm{z}|\bm{x}) = H_{q}(\bm{z}) - I_{q}(\bm{x},\bm{z})
\eeq
yields the following expression for the reconstruction error: 
\beq
    \mathcal{R}  = \KL{q(\bm{x},\bm{z})}{p(\bm{x},\bm{z})} - \KL{q(\bm{z})}{p(\bm{z})} - I_{q}(\bm{x},\bm{z}) + H_{q}(\bm{z})
\eeq
Since the KL-divergence is positive, we obtain the bound: 
  \begin{align}
    \mathcal{R} \leq \KL{q(\bm{x}, \bm{z})}{p(\bm{x}, \bm{z})} - I_{q}(\bm{x},\bm{z}) + H_{q}(\bm{z})
  \end{align}
which is tight whenever the induced marginal $q(\bm{z})$ matches the prior distribution $p(\bm{z})$. 

\subsection{Embeddings for bi-direction 25 Gaussians experiments}
Here (\Fig~\ref{fig:bi_directional2}) we present the embeddings for the experiments corresponding to \Fig~\ref{fig:bi_directional}.
\begin{figure*}[t]
\title{Embeddings for bi-directional 25 Gaussians experiments}
\begin{minipage}{0.2\textwidth}
\centering
(a) ALI
\includegraphics[width=0.99\columnwidth, clip=True, trim=40 20 40 40]{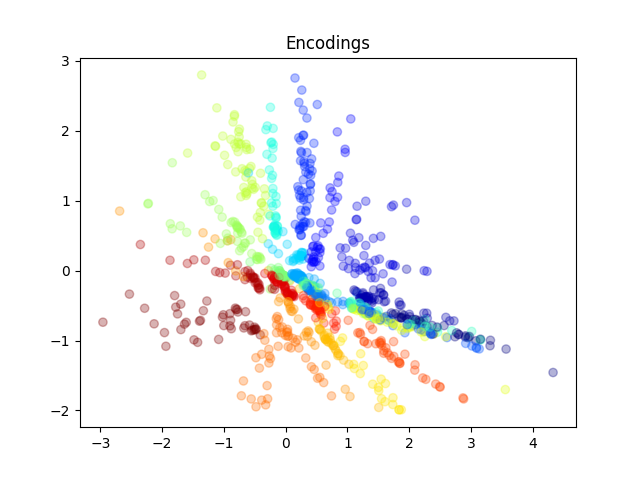}
\end{minipage}
\hfill
\begin{minipage}{0.2\textwidth}
\centering
(b) ALICE (L2)
\includegraphics[width=0.99\columnwidth, clip=True, trim=40 20 40 40]{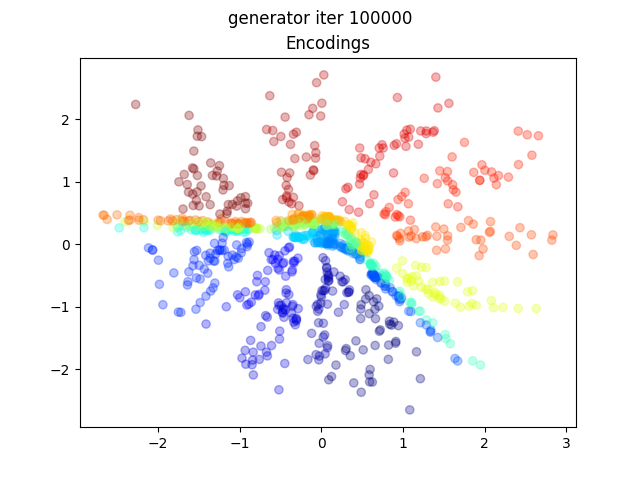}
\end{minipage}
\hfill
\begin{minipage}{0.2\textwidth}
\centering
(c) ALICE (A)
\includegraphics[width=.99\linewidth, clip=True, trim=40 20 40 40]{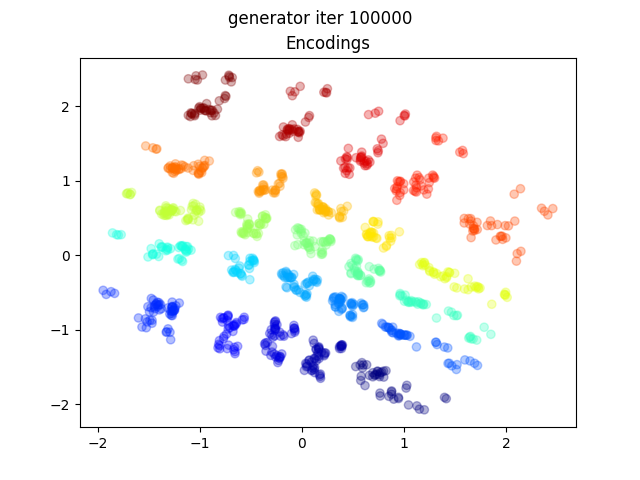}
\end{minipage}
\hfill
\begin{minipage}{0.2\textwidth}
\centering
(d) MINE
\includegraphics[width=.99\linewidth, clip=True, trim=40 20 40 40]{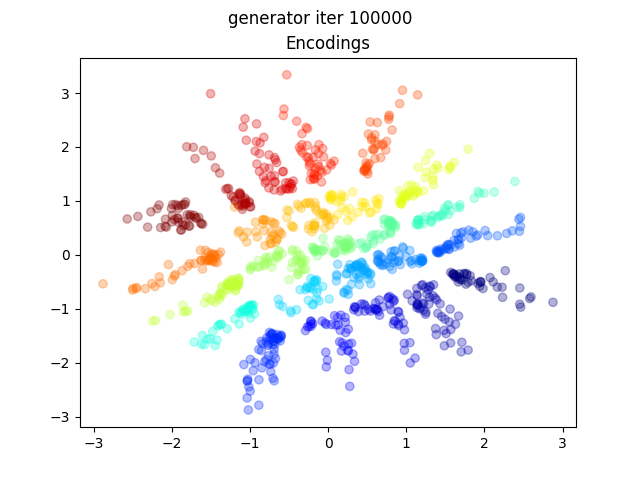}
\end{minipage}
\caption{Embeddings from adversarially learned inference (ALI) and variations intended to increase the mutual information.
Shown left to right are the baseline (ALI), ALICE with the L2 loss to minimize the reconstruction error, ALI with an additional adversarial loss, and MINE.}
\label{fig:bi_directional2}
\end{figure*}

\end{document}